\newcolumntype{L}[1]{>{\raggedright\let\newline\\\arraybackslash\hspace{0pt}}m{#1}}
\newcolumntype{C}[1]{>{\centering\let\newline\\\arraybackslash\hspace{0pt}}m{#1}}
\newcolumntype{R}[1]{>{\raggedleft\let\newline\\\arraybackslash\hspace{0pt}}m{#1}}
\let\MYcaption\@makecaption
\let\@makecaption\MYcaption
\let\oldgls\gls
\let\oldglspl\glspl
\newcommand\fussy@ifnextchar[3]{%
	\let\reserved@d=#1%
	\def\reserved@a{#2}%
	\def\reserved@b{#3}%
	\futurelet\@let@token\fussy@ifnch}
\def\fussy@ifnch{%
	\ifx\@let@token\reserved@d
		\let\reserved@c\reserved@a
	\else
		\let\reserved@c\reserved@b
	\fi
	\reserved@c}
\renewcommand{\gls}[1]{%
\oldgls{#1}\fussy@ifnextchar.{\@checkperiod}{\@}}
\renewcommand{\glspl}[1]{%
\oldglspl{#1}\fussy@ifnextchar.{\@checkperiod}{\@}}
\newcommand{\@checkperiod}[1]{%
	\ifnum\sfcode`\.=\spacefactor\else#1\fi
}
\newacronym{wrt}{w.r.t.}{with respect to}
\newacronym{RHS}{R.H.S.}{right-hand side}
\newacronym{LHS}{L.H.S.}{left-hand side}
\newacronym{iid}{i.i.d.}{independent and identically distributed}
\newacronym{SOTA}{SOTA}{state-of-the-art}
\crefname{equation}{}{}
\Crefname{equation}{}{}
\crefname{claim}{claim}{claims}
\crefname{step}{step}{steps}
\crefname{line}{line}{lines}
\crefname{condition}{condition}{conditions}
\crefname{dmath}{}{}
\crefname{dseries}{}{}
\crefname{dgroup}{}{}
\crefname{Problem}{Problem}{Problems}
\crefname{Theorem}{Theorem}{Theorems}
\crefname{Corollary}{Corollary}{Corollaries}
\crefname{Proposition}{Proposition}{Propositions}
\crefname{Lemma}{Lemma}{Lemmas}
\crefname{Definition}{Definition}{Definitions}
\crefname{Example}{Example}{Examples}
\crefname{Assumption}{Assumption}{Assumptions}
\crefname{Remark}{Remark}{Remarks}
\crefname{Rem}{Remark}{Remarks}
\crefname{remarks}{Remarks}{Remarks}
\crefname{Appendix}{Appendix}{Appendices}
\crefname{Supplement}{Supplement}{Supplements}
\crefname{Exercise}{Exercise}{Exercises}
\crefname{Theorem_A}{Theorem}{Theorems}
\crefname{Corollary_A}{Corollary}{Corollaries}
\crefname{Proposition_A}{Proposition}{Propositions}
\crefname{Lemma_A}{Lemma}{Lemmas}
\crefname{Definition_A}{Definition}{Definitions}
		\let\Cref\crtCref
		\let\cref\crtcref
\def\cleartheorem#1{%
    \expandafter\let\csname#1\endcsname\relax
    \expandafter\let\csname c@#1\endcsname\relax
}
\def\clearthms#1{ \@for\tname:=#1\do{\cleartheorem\tname} }
		\newtheorem{Theorem}{Theorem}
		\newtheorem{Corollary}{Corollary}
		\newtheorem{Proposition}{Proposition}
		\newtheorem{Theorem}{Theorem}
	\newtheorem{Definition}{Definition}
	\newtheorem{Example}{Example}
	\newtheorem{Remark}{Remark}
\theoremstyle{remark}
\theoremstyle{plain}
\newcommand{\qednew}{\nobreak \ifvmode \relax \else
		\ifdim\lastskip<1.5em \hskip-\lastskip
			\hskip1.5em plus0em minus0.5em \fi \nobreak
		\vrule height0.75em width0.5em depth0.25em\fi}
\NewDocumentCommand{\movedownsub}{e{^_}}{%
	\IfNoValueTF{#1}{%
		\IfNoValueF{#2}{^{}}
	}{%
		^{#1}
	}%
	\IfNoValueF{#2}{_{#2}}
}
\let\latexchi\chi
\RenewDocumentCommand{\chi}{}{\latexchi\movedownsub}
\newcommand{\calP}{\mathcal{P}}
\newcommand{\bx}{\mathbf{x}}
\newcommand{\bX}{\mathbf{X}}
\newcommand{\by}{\mathbf{y}}
\newcommand{\bz}{\mathbf{z}}
\DeclareSymbolFont{bsfletters}{OT1}{cmss}{bx}{n}
\DeclareSymbolFont{ssfletters}{OT1}{cmss}{m}{n}
\DeclareMathSymbol{\bsfGamma}{0}{bsfletters}{'000}
\DeclareMathSymbol{\ssfGamma}{0}{ssfletters}{'000}
\DeclareMathSymbol{\bsfDelta}{0}{bsfletters}{'001}
\DeclareMathSymbol{\ssfDelta}{0}{ssfletters}{'001}
\DeclareMathSymbol{\bsfTheta}{0}{bsfletters}{'002}
\DeclareMathSymbol{\ssfTheta}{0}{ssfletters}{'002}
\DeclareMathSymbol{\bsfLambda}{0}{bsfletters}{'003}
\DeclareMathSymbol{\ssfLambda}{0}{ssfletters}{'003}
\DeclareMathSymbol{\bsfXi}{0}{bsfletters}{'004}
\DeclareMathSymbol{\ssfXi}{0}{ssfletters}{'004}
\DeclareMathSymbol{\bsfPi}{0}{bsfletters}{'005}
\DeclareMathSymbol{\ssfPi}{0}{ssfletters}{'005}
\DeclareMathSymbol{\bsfSigma}{0}{bsfletters}{'006}
\DeclareMathSymbol{\ssfSigma}{0}{ssfletters}{'006}
\DeclareMathSymbol{\bsfUpsilon}{0}{bsfletters}{'007}
\DeclareMathSymbol{\ssfUpsilon}{0}{ssfletters}{'007}
\DeclareMathSymbol{\bsfPhi}{0}{bsfletters}{'010}
\DeclareMathSymbol{\ssfPhi}{0}{ssfletters}{'010}
\DeclareMathSymbol{\bsfPsi}{0}{bsfletters}{'011}
\DeclareMathSymbol{\ssfPsi}{0}{ssfletters}{'011}
\DeclareMathSymbol{\bsfOmega}{0}{bsfletters}{'012}
\DeclareMathSymbol{\ssfOmega}{0}{ssfletters}{'012}
\newcommand{\btheta}{\bm{\theta}}
\newcommand{\blambda}{\bm{\lambda}}
\newcommand{\bLambda}{\bm{\Lambda}}
\newcommand*\rel@kern[1]{\kern#1\dimexpr\macc@kerna}
\newcommand*\widebar[1]{%
  \begingroup
  \def\mathaccent##1##2{%
    \rel@kern{0.8}%
    \overline{\rel@kern{-0.8}\macc@nucleus\rel@kern{0.2}}%
    \rel@kern{-0.2}%
  }%
  \macc@depth\@ne
  \let\math@bgroup\@empty \let\math@egroup\macc@set@skewchar
  \mathsurround\z@ \frozen@everymath{\mathgroup\macc@group\relax}%
  \macc@set@skewchar\relax
  \let\mathaccentV\macc@nested@a
  \macc@nested@a\relax111{#1}%
  \endgroup
}
\DeclareMathOperator*{\argmax}{arg\,max}
\DeclareMathOperator{\var}{var}
\DeclareMathOperator{\cov}{cov}
\newcommand{\ifbcdot}[1]{\ifblank{#1}{\cdot}{#1}}
\DeclarePairedDelimiterX\abs[1]{\lvert}{\rvert}{\ifbcdot{#1}}
\DeclarePairedDelimiterX\parens[1]{(}{)}{\ifbcdot{#1}}
\DeclarePairedDelimiterX\brk[1]{[}{]}{\ifbcdot{#1}}
\DeclarePairedDelimiterX\braces[1]{\{}{\}}{\ifbcdot{#1}}
\DeclarePairedDelimiterX\angles[1]{\langle}{\rangle}{\ifblank{#1}{\cdot,\cdot}{#1}}
\DeclarePairedDelimiterX\ip[2]{\langle}{\rangle}{\ifbcdot{#1},\ifbcdot{#2}}
\DeclarePairedDelimiterX\norm[1]{\lVert}{\rVert}{\ifbcdot{#1}}
\DeclarePairedDelimiterX\ceil[1]{\lceil}{\rceil}{\ifbcdot{#1}}
\DeclarePairedDelimiterX\floor[1]{\lfloor}{\rfloor}{\ifbcdot{#1}}
\DeclareFontFamily{U}{matha}{\hyphenchar\font45}
\DeclareFontShape{U}{matha}{m}{n}{
      <5> <6> <7> <8> <9> <10> gen * matha
      <10.95> matha10 <12> <14.4> <17.28> <20.74> <24.88> matha12
      }{}
\DeclareSymbolFont{matha}{U}{matha}{m}{n}
\DeclareFontFamily{U}{mathx}{\hyphenchar\font45}
\DeclareFontShape{U}{mathx}{m}{n}{
      <5> <6> <7> <8> <9> <10>
      <10.95> <12> <14.4> <17.28> <20.74> <24.88>
      mathx10
      }{}
\DeclareSymbolFont{mathx}{U}{mathx}{m}{n}
\DeclareMathDelimiter{\vvvert}{0}{matha}{"7E}{mathx}{"17}
\DeclarePairedDelimiterX\vertiii[1]{\vvvert}{\vvvert}{\ifbcdot{#1}}
\DeclarePairedDelimiterXPP\trace[1]{\operatorname{Tr}}{(}{)}{}{\ifbcdot{#1}} 
\DeclarePairedDelimiterXPP\col[1]{\operatorname{col}}{\{}{\}}{}{\ifbcdot{#1}} 
\DeclarePairedDelimiterXPP\row[1]{\operatorname{row}}{\{}{\}}{}{\ifbcdot{#1}} 
\DeclarePairedDelimiterXPP\erf[1]{\operatorname{erf}}{(}{)}{}{\ifbcdot{#1}}
\DeclarePairedDelimiterXPP\erfc[1]{\operatorname{erfc}}{(}{)}{}{\ifbcdot{#1}}
\DeclarePairedDelimiterXPP\KLD[2]{D}{(}{)}{}{\ifbcdot{#1}\, \delimsize\|\, \ifbcdot{#2}} 
\DeclarePairedDelimiterXPP\op[2]{\operatorname{#1}}{(}{)}{}{#2} 
\newcommand{\ud}{\,\mathrm{d}} 
\DeclarePairedDelimiterXPP\indicate[1]{{\bf 1}}{\{}{\}}{}{\ifbcdot{#1}}
\newcommand{\indicator}[1]{{\bf 1}_{\braces*{\ifbcdot{#1}}}}
\newcommand{\tc}[1]{^{(#1)}}
\NewDocumentCommand\ofrac{s m}{%
	\IfBooleanTF#1%
	{\dfrac{1}{#2}}%
	{\frac{1}{#2}}%
}
\NewDocumentCommand\ddfrac{s m m}{%
	\IfBooleanTF#1%
	{\dfrac{\mathrm{d} {#2}}{\mathrm{d} {#3}}}%
	{\frac{\mathrm{d} {#2}}{\mathrm{d} {#3}}}%
}
\NewDocumentCommand\ppfrac{s m m}{%
	\IfBooleanTF#1%
	{\dfrac{\partial {#2}}{\partial {#3}}}%
	{\frac{\partial {#2}}{\partial {#3}}}%
}
\providecommand\given{}
\DeclarePairedDelimiterX\Set[2]\{\}{%
\renewcommand\given{\SetSymbol[\delimsize]{#1}}
#2
}
\DeclarePairedDelimiterX\Setc[1]\{\}{%
\renewcommand\given{\SetSymbol{:}}
#1
}
\NewDocumentCommand\set{s o m}{%
	\IfBooleanTF#1%
	{\IfValueTF{#2}{\Set*{#2}{#3}}{\Setc*{#3}}}%
	{\IfValueTF{#2}{\Set{#2}{#3}}{\Setc{#3}}}%
}
\NewDocumentCommand{\evalat}{ s O{\big} m e{_^} }{%
\IfBooleanTF{#1}%
{\left. #3 \right|}{#3#2|}%
\IfValueT{#4}{_{#4}}%
\IfValueT{#5}{^{#5}}%
}
\providecommand\given{}
\DeclarePairedDelimiterXPP\cprob[1]{}(){}{
\renewcommand\given{\nonscript\,\delimsize\vert\allowbreak\nonscript\,\mathopen{}}%
\DeclarePairedDelimiterXPP\cexp[1]{}[]{}{
\renewcommand\given{\nonscript\,\delimsize\vert\allowbreak\nonscript\,\mathopen{}}%
#1%
}
\DeclareDocumentCommand \P { s e{_^} d() g } {%
	\mathbb{P}%
	\IfBooleanTF{#1}%
		{
			\IfValueT{#2}{_{#2}}%
			\IfValueT{#3}{^{#3}}%
			\IfValueTF{#5}{\cprob{#4 \given #5}}{\IfValueT{#4}{\cprob{#4}}}%
		}%
		{
			\IfValueT{#2}{_{#2}}%
			\IfValueT{#3}{^{#3}}%
			\IfValueTF{#5}{\cprob*{#4 \given #5}}{\IfValueT{#4}{\cprob*{#4}}}%
		}%
}
\DeclareDocumentCommand \E { s e{_^} o g } {%
	\mathbb{E}%
	\IfBooleanTF{#1}%
		{
			\IfValueT{#2}{_{#2}}%
			\IfValueT{#3}{^{#3}}%
			\IfValueTF{#5}{\cexp{#4 \given #5}}{\IfValueT{#4}{\cexp{#4}}}%
		}%
		{
			\IfValueT{#2}{_{#2}}%
			\IfValueT{#3}{^{#3}}%
			\IfValueTF{#5}{\cexp*{#4 \given #5}}{\IfValueT{#4}{\cexp*{#4}}}%
		}%
}
\DeclareDocumentCommand \Var { s e{_^} d() g } {%
	\var%
	\IfBooleanTF{#1}%
		{
			\IfValueT{#2}{_{#2}}%
			\IfValueT{#3}{^{#3}}%
			\IfValueTF{#5}{\cprob{#4 \given #5}}{\IfValueT{#4}{\cprob{#4}}}%
		}%
		{
			\IfValueT{#2}{_{#2}}%
			\IfValueT{#3}{^{#3}}%
			\IfValueTF{#5}{\cprob*{#4 \given #5}}{\IfValueT{#4}{\cprob*{#4}}}%
		}%
}
\DeclareDocumentCommand \Cov { s e{_^} d() g } {%
	\cov%
	\IfBooleanTF{#1}%
		{
			\IfValueT{#2}{_{#2}}%
			\IfValueT{#3}{^{#3}}%
			\IfValueTF{#5}{\cprob{#4 \given #5}}{\IfValueT{#4}{\cprob{#4}}}%
		}%
		{
			\IfValueT{#2}{_{#2}}%
			\IfValueT{#3}{^{#3}}%
			\IfValueTF{#5}{\cprob*{#4 \given #5}}{\IfValueT{#4}{\cprob*{#4}}}%
		}%
}
\NewDocumentCommand \dist {m o o} {%
\mathrm{#1}\left(%
	\IfValueT{#3}{%
		\tl_if_blank:nTF{ #3 }{\cdot\, \middle|\, }{#3\, \middle|\, }%
	}
	\IfValueT{#2}{#2}%
\right)%
}
\NewDocumentCommand {\cbrace} {t+ D[]{black} D(){\widthof{#5}} m m } {%
	\begingroup%
		\color{#2}
		\IfBooleanTF{#1}{%
			\overbrace{#4}^%
		}{
			\underbrace{#4}_%
		}%
		{\parbox[c]{#3}{\centering\footnotesize{#5}}}%
	\endgroup%
}
\let\oldforall\forall
\renewcommand{\forall}{\oldforall \, }
\let\oldexist\exists
\renewcommand{\exists}{\oldexist \, }
\newcommand{\udcloser}[1]{\underline{\smash{#1}}}
\newcommand{\rankcolor}[2]{%
	\expandafter\renewcommand\csname #1\endcsname[1]{%
		\ifblank{##1}{%
			{\color{#2} \textbf{#2}}%
		}{%
			\ifmmode
				\textcolor{#2}{\bm{##1}}%
			\else%
				{\color{#2} \textbf{##1}}%
			\fi	
		}%
	}
}
\newcommand{\figref}[1]{Fig.~\ref{#1}}
\DeclareDocumentCommand{\includeCroppedPdf}{ o O{./Figures/} m }{
	\IfFileExists{#2#3-crop.pdf}{}{%
		\immediate\write18{pdfcrop #2#3.pdf #2#3-crop.pdf}}%
	\includegraphics[#1]{#2#3-crop.pdf}
}
\newcommand*{\addFileDependency}[1]{
  \typeout{(#1)}
  \@addtofilelist{#1}
  \IfFileExists{#1}{}{\typeout{No file #1.}}
}
\definecolor{gray90}{gray}{0.9}
\def\colorlist{red,blue,brown,cyan,darkgray,gray,lightgray,green,lime,magenta,olive,orange,pink,purple,teal,violet,white,yellow}
\def\startcomment{[}
	\newcommand{\createcolor}[1]{%
			\expandafter\newcommand\csname #1\endcsname[1]{{\color{#1} ##1}}%
	}
	\newcommand{\msout}[1]{\text{\color{green} \sout{\ensuremath{#1}}}}
	\newcommand{\del}[1]{{\color{green}\ifmmode \msout{#1}\else\sout{#1}\fi}}
	\newcommand{\createcolor}[1]{%
			\expandafter\newcommand\csname #1\endcsname[1]{%
				\noexpandarg%
				\StrChar{##1}{1}[\firstletter]%
				\if\firstletter\startcomment%
					\relax
				\else%
					##1
				\fi
			}%
	}
	\newcommand{\msout}[1]{}
	\newcommand{\del}[1]{}
\def\@tempa#1,{%
    \ifx\relax#1\relax\else
        \createcolor{#1}%
        \expandafter\@tempa
    \fi
}
\newcommand{\hhide}[1]{}
	\def\@testdef #1#2#3{%
		\def\reserved@a{#3}\expandafter \ifx \csname #1@#2\endcsname
			\reserved@a  \else
			\typeout{^^Jlabel #2 changed:^^J%
				\meaning\reserved@a^^J%
				\expandafter\meaning\csname #1@#2\endcsname^^J}%
			\@tempswatrue \fi}
\theoremstyle{plain}
\theoremstyle{definition}
\theoremstyle{remark}
\icmltitlerunning{Graph Neural Networks with a Distribution of Parametrized Graphs}
\begin{document}

\twocolumn[
\icmltitle{Graph Neural Networks with a Distribution of Parametrized Graphs}



\icmlsetsymbol{equal}{*}

\begin{icmlauthorlist}
\icmlauthor{See Hian Lee}{eee,ntu}
\icmlauthor{Feng Ji}{eee,ntu}
\icmlauthor{Kelin Xia }{spms,ntu}
\icmlauthor{Wee Peng Tay}{eee,ntu}

\end{icmlauthorlist}

\icmlaffiliation{eee}{School of Electrical and Electronic Engineering}
\icmlaffiliation{spms}{School of Physical and Mathematical Sciences}
\icmlaffiliation{ntu}{Nanyang Technological University, Singapore}

\icmlcorrespondingauthor{Feng Ji}{jifeng@ntu.edu.sg}
\icmlcorrespondingauthor{Wee Peng Tay}{wptay@ntu.edu.sg}

\icmlkeywords{Machine Learning, ICML}

\vskip 0.3in
]



\printAffiliationsAndNotice{}

\begin{abstract}
Traditionally, graph neural networks have been trained using a single observed graph. However, the observed graph represents only one possible realization. In many applications, the graph may encounter uncertainties, such as having erroneous or missing edges, as well as edge weights that provide little informative value. To address these challenges and capture additional information previously absent in the observed graph, we introduce latent variables to parameterize and generate multiple graphs. The parameters follow an unknown distribution to be estimated. We propose a formulation in terms of maximum likelihood estimation of the network parameters. Therefore, it is possible to devise an algorithm based on Expectation-Maximization (EM). Specifically, we iteratively determine the distribution of the graphs using a Markov Chain Monte Carlo (MCMC) method, incorporating the principles of PAC-Bayesian theory. Numerical experiments demonstrate improvements in performance against baseline models on node classification for both heterogeneous and homogeneous graphs.
\end{abstract}

\section{Introduction}
Graph Neural Networks (GNNs) have facilitated graph representational learning by building upon Graph Signal Processing (GSP) principles and expanding their application in the domains of machine learning. Moreover, GNNs have demonstrated their effectiveness across a wide range of tasks in domains such as chemistry \citep{chemistry2017}, recommendation systems \citep{recommender2018,recommender2022}, financial systems \citep{financialhyp2021} and e-commerce settings \citep{ecommerce}, among others. However, GSP and GNNs conventionally rely on a fixed graph shift operator, such as the adjacency or Laplacian matrix, to analyze and learn from graph data, assuming that the given graph is accurate and noise-free. This approach has inherent limitations, considering that graph data is often uncertain.

The uncertainty is due to the existence of multiple potential variations in graph constructions as a universal optimal method does not exist. Structural noise, which includes missing or spurious edges, and the absence of informative edge weights, further contributes to the uncertainty in graph data \citep{bgcn, gsnoise2023}. Handling this uncertainty is crucial as the graph directly influences the results of both GSP and GNNs \citep{gspbioreview}. 

Several GNN works have recognized that the provided graph in benchmark datasets is suboptimal. For example, in \cite{oversquashing2022}, a method was introduced to enhance the provided graph by rewiring it at graph bottlenecks. Similarly, in \cite{li2021cgnn} and \cite{ze20curvature}, approaches were developed to reweigh edges, to reduce information flow at cluster boundaries. Another perspective involves considering the given or observed graph as a particular realization of a graph model, as discussed in \cite{bgcn} (cf. \cref{sec:related_works} on related works). In their work, a Bayesian framework was adopted to learn a more robust model that can withstand perturbations in graph topology. These collective efforts underscore the common observation that the observed graph is often imperfect, and determining the optimal graph is a non-trivial task, as it depends on both the physical connections and the edge weights, which regulate the rates of information transmission  \citep{gsptsp2023}.


Our work aligns with the viewpoint presented in \cite{bgcn}. We conceptualize the observed graph as an individual instance originating from a distribution of graphs, which is influenced by one or more latent parameters. Nevertheless, in contrast to \cite{bgcn} which proposed a Bayesian framework, we propose an EM framework for graph learning and name our model EMGNN. Even though both are probabilistic frameworks, the focus is distinctly different. In the case of the Bayesian framework of \cite{bgcn}, the focus is on estimating the posterior distribution of model parameters given the data. As such, model parameters are deemed as random variables trained by a series of characteristically similar graphs. Meanwhile, in our EM framework, we seek to maximize the log-likelihood of the observed data in conjunction with the latent variables. Additionally, we permit the generated graphs to demonstrate more pronounced variations. 
Our main contributions are:
\begin{itemize}
    \item We present a general framework for modeling the distribution of graphs to handle uncertainty in graph data. The learned distribution provides valuable insights into our model's behavior.
    \item We formulate the graph learning problem as a maximum likelihood estimation (MLE) so that tools from statistical learning can be applied. The new objective subsumes the classical objective of minimizing empirical loss if the graph is deterministic. 
    \item We evaluate our model on nine datasets in two distinct applications, and observe promising performance compared to the respective baseline methods.
    \item We inspect the learned graph distribution, confirming that it effectively captures the intricacies of heterogeneous graph datasets, thus validating the utility of our model and framework.
\end{itemize}
Notations are in \cref{sec:lon} and proofs are in \cref{sec:tdi}.

\section{Preliminaries}\label{sec:prelims}

\subsection{Graph Neural Networks}
Graph neural networks \citep{gcnv2,KanZhaSon,brody2022how,9413417,zhao2023graph}, which are neural networks designed to operate on graphs, typically employ the message-passing framework. Within this framework, the features of each node are updated by integrating with those of its neighboring nodes. 

More specifically, suppose that we have a graph $G=(V, E)$, where $V$ is the set of vertices and $E$ is the set of edges. Moreover, each node $v \in V$ is associated with (initial) node features represented by $x_v^0$.  The node features can then be updated in the $k$-th layer as follows:
\begin{align} \label{eq:vsw}
    x_v^k = \sigma\big(W^k \text{AGGR}(\{x_v^{k-1} \mid v \in \mathcal{N}(v)\})\big)
\end{align}
where $\sigma$ is an activation function, $W^k$ are the learnable weights in the $k$-th layer and $\mathcal{N}(v)$ is the set of neighbors of $v$. AGGR is a message aggregation function. The choice of AGGR defines various variants of GNNs \citep{xu2018gin}. For example, the mean operator yields Graph Convolutional Networks (GCN) \citep{kipf2016semi}, while using the attention mechanism results in Graph Attention Networks (GAT) \citep{velickovic2018graph}.

In a GNN with $K$ layers, the last layer outputs features $\{x_1^K, \dots, x^K_{\mid V \mid}\}$. For node classification, these features can be used directly. Meanwhile, for a graph-level task, a READOUT graph-level pooling function is needed to obtain the graph-level representation \citep{xu2018gin}.

\subsection{Signal Processing over a Distribution of Graphs} \label{sec:spo}

GNN is closely tied to GSP's theory \citep{Shu13}. Briefly, given an undirected graph $G$, we consider a fixed graph shift operator $S$ such as its adjacency or Laplacian matrix. A graph signal is a vector $\bx = (x_v)_{v\in V}$ that associates a number $x_v$ to each node $v \in V$. Intuitively, applying the linear transformation $S$ to $\bx$ is considered as a ``shift'' of $\bx$. If $S$ is the normalized adjacency matrix, then it amounts to the AGGR step of \cref{eq:vsw} for GCN. More generally, if $P(\cdot)$ is a single variable polynomial, then plugging in $S$ results in the matrix $P(S)$, which is called a \emph{convolution filter} in GSP. This notion of convolution appears in \cite{Def16}, and has been widely used since then.  

On the signal processing side, \cite{gsptsp2023} has developed a theory that generalizes traditional GSP. The authors propose a signal processing framework assuming a family of graphs are considered simultaneously, to tackle uncertainties in graph constructions. Formally, it considers a distribution $\mu$ of graph shift operators $S_{\lambda}$ parametrized by $\lambda$ in a sample space $\Lambda$. The work develops corresponding signal processing notions such as Fourier transform, filtering, and sampling. In particular, a convolution takes the form $\E_{\lambda \sim \mu}[P_{\lambda}(S_{\lambda})]$, where $P_{\lambda}(\cdot)$ is a polynomial and $P_{\lambda}(S_{\lambda})$ is an ordinary convolution with shift $S_{\lambda}$. Our work is based on the idea that replaces $P_{\lambda}(S_{\lambda})$ with a more general filter such as a GNN model. As a preview, unlike \cite{gsptsp2023}, we introduce an EM framework that simultaneously estimates model parameters and the distribution $\mu$.

\section{The Problem Formulation}

\subsection{Distributions for Different Graph Types} \label{sec:dfd}
Here, we outline how $\Lambda$, a parameter sample space, arises for different graph types, showcasing why the proposed framework is useful for graph-related tasks. Parameters $\lambda \in \Lambda$ can be scalars, vectors, or more general forms, enabling task-specific graph parameterization and diverse graph generation. For instance, a specific $\lambda$ can indicate the edge weight for an edge type in heterogeneous graphs and be employed to create varied weighted graphs. Details are task-specific and provided in \cref{sec:exp}. Though the schemes are simple and intuitive, there may be alternatives for $\Lambda$ based on other factors.


\emph{Heterogeneous graphs} are graph structures characterized by the presence of multiple node types and multiple edge types, imparting a greater degree of complexity compared to \emph{homogeneous graphs}, which consist of a single node and edge type. For a heterogeneous graph, the insight is that we assign a parameter to each edge type, whose distribution is to be estimated and used. Intuitively, in a model based on message passing, the parameters for different edge types are interpreted as different information transmission rates. Such information is not observed in the given graph. 

It is less obvious how $\Lambda$ can be constructed for a homogenous graph. Intuitively, we assume that the observed graph contains ``noisy'' edges and has missing edges (cf.\ \citet{bgcn}). Therefore, parameters that are interpreted as probabilities for adding and removing edges from the observed graph, can be introduced (see an example in \figref{fig:fah}). 

\begin{figure}
    \centering
    \includegraphics[scale=0.6]{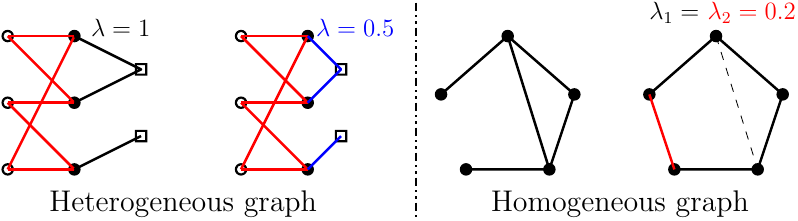}
    \caption{For a heterogenous graph, we may use a parameter $\lambda$ to control the information transmission rate for each edge type. For example, choosing $\lambda=1$ or $\lambda=0.5$ for the edge type between ``disc'' and ``square'' nodes yields different weighted graphs. Conversely, in a homogeneous example with 5 initial edges, by choosing $\lambda_1 = \lambda_2 = 0.2$, $20\%$ of the initial and missing edges are randomly removed and added, potentially forming a ``pentagon''.}
    \label{fig:fah}
\end{figure}

Though we have different setups, a unified framework to deal with both is proposed in the next section. To emphasize the importance of having the correct $\mu$ on $\Lambda$, we note the following result. Details are in \cref{sec:dat} to avoid overloading the discussion with terms not used in the sequel.

\begin{Theorem} \label{thm:iit}
    (Informal) If the parameterization $\lambda \in \Lambda \mapsto S_{\lambda}$ is sufficiently continuous, then the $\mu$-expected feature representation of a GNN model, whose layers are of the form (\ref{eq:vsw}), changes continuously as the distribution $\mu$ varies.
\end{Theorem}

 {\bf \emph{Discussion}}: Intuitively, the result claims that if the parameterization is sufficiently regular, then similar distributions on $\Lambda$ yield GNN models with similar feature representations. Hence, suppose there is a true graph $G_0$ parameterized $\lambda_0$, whose associated GNN model gives a good feature representation. Instead of using a possibly ``noisy'' observed graph $G$ parameterized by $\lambda$, it might be beneficial to use a distribution $\mu$ on $\Lambda$ closer than $\delta_{\lambda}$ to $\delta_{\lambda_0}$, even if $\mu$ is not a delta distribution. It is less restrictive to allow non-delta graph distributions. Moreover, we are also motivated by the insight that there might be several node connections that contribute with different importance to a learning task.

\subsection{Maximum Likelihood Estimation}\label{subsec:formulation}

Motivated by the previous subsection, we consider a distribution $\mu$ on a parameter (sample) space $\Lambda \subset \mathbb{R}^r$ of graphs $\{G_{\lambda}, \lambda\in \Lambda\}$, with a fixed set of nodes $V$. The space $\Lambda$ can be finite, countably infinite, or even uncountable. For each $G_{\lambda}$, there is a corresponding shift operator $S_{\lambda}$. We usually assume that $\mu$ has a density function $p(\cdot)$ w.r.t.\ a base measure on $\Lambda$. For example, if $\Lambda$ is finite, we can use the discrete counting measure as the base measure. On the other hand, if $\Lambda$ is a compact interval in $\mathbb{R}$, then we can choose the Lebesgue measure as the base measure. 

Assume that each node $v \in V$ is associated with features $x_v$. They are collectively denoted by $\bx$. 
Our framework depends on a \emph{fixed GNN model architecture $\Psi$}, e.g., GCN. It outputs the learned embeddings $\bz = \Psi(\lambda,\bx;\btheta)$ given the node features $\bx$, the graph parameter $\lambda$, and the GNN model parameters $\btheta$. These in turn are used to determine a vector of labels $\hat{\by}$. For a task-specific loss $\ell(\cdot,\cdot)$ that compares predicted $\hat{\by}$ and true label (vector) $\by$, we may compute ${L}_{\bX}(\lambda,\btheta) = \ell(\hat{\by},\by)$. We use $\bX$ to denote the full information $\{\bx,\by\}$. We interpret $\bX$ as a sample from a random variable, denoted by $\mathfrak{X}$, of collective information of features and labels.

An example of $\Psi$ is the model described by \cref{eq:vsw}. We may also allow parameters $\btheta$ and $\lambda$ to determine $W^k$. For example, if $\btheta = \{\theta^k_1,\theta^k_2 \mid 1\leq k\leq K\}$ and $\lambda$ is a scalar parameter, then one can choose a linear combination $W^k=\theta^k_1 + \lambda\theta^k_2$. Moreover, AGGR is determined by the shift $S_{\lambda}$ associated with $G_{\lambda}$. 

In general, as $\lambda$ follows an unknown distribution $\mu$, it is hard to find the optimal $\btheta$ by minimizing $\E_{\lambda\sim \mu}[{L}_{\bX}(\lambda,\btheta)]$ directly. On the other hand, the EM algorithm \citep{Bis06} enables the joint estimation of $\mu$ and $\btheta$ if we can reformulate the objective as an MLE.

To minimize the loss given $\bX$, the parameter $\btheta$ is determined by $\lambda$ and vice versa. Therefore, $\Psi(\cdot,\bx;\cdot)$ becomes a random GNN model that depends on $\lambda, \btheta$ and input $\bx$. We aim to \emph{identify a realization of the random models that makes the observation $\bX$ likely}, i.e., there is less discrepancy between the estimator labels $\hat{\by}$ and ground truth labels $\by$ measured by the loss $\ell(\hat{\by},\by)$. Motivated by the discussions above, we consider the likelihood function $p(\lambda,\bX \mid \btheta)$ on $\btheta$ and formulate the following MLE as our objective:
\begin{align} \label{eq:tap}
\btheta^{*} = \argmax_{\btheta} p(\bX \mid 
\btheta) = \argmax_{\btheta}\E_{\lambda\sim \mu} [p(\lambda,\bX \mid \btheta)]. 
\end{align}

Before discussing the main algorithm in subsequent subsections, we preview the roles of $\mu$ and ${L}_{\bX}(\cdot,\cdot)$ in the algorithm. We shall see that the EM algorithm outputs a distribution $\hat{\mu}$ of $\lambda$, serving as an estimate of $\mu$, by leveraging the PAC-Bayesian framework \citep{Gue19}. In this framework, the density of $\lambda$ is proportional to the Gibbs posterior, depending on $\ell(\cdot,\cdot)$. Consequently, $\hat{\mu}$ assigns higher probability density to $\lambda$ when the loss ${L}_{\bX}(\lambda,\btheta^*)$ is lower. Therefore, we are minimizing the given loss as a main component of the algorithm. In the above formulation, $\btheta$ is the parameter and $\mu$ is unknown before attaining $\btheta$. However, once $\btheta$ is determined, an estimation of $\mu$ is obtained using the Gibbs posterior. 

\begin{Example}
Assume that $\mu$ is the delta distribution $\delta_{\lambda_0}$ supported on $\lambda_0$, so that the graph $G_{\lambda_0}$ is deterministic. If we consider the Gibbs posterior, then we have $p(\bX \mid \btheta) \propto \exp(-\eta \ell(\hat{\by},\by))$ for a hyperparameter $\eta$, where $\hat{\by}$ depends on both $\bX=\{\bx,\by\}$ and $\btheta$. Thus, maximizing $p(\bX\mid \btheta)$ is equivalent to the classical objective of minimizing $\ell(\hat{\by},\by)$. 
\end{Example}

\section{The Proposed Method and Its Derivation}\label{sec:method}

\subsection{EM for GNN}\label{subsec:EM}
Optimizing \cref{eq:tap} directly can be challenging, and we utilize the EM algorithm that employs an iterative approach alternating between the E-step and the M-step. Adapted to our setting, the process unfolds as follows:
\begin{enumerate}[(a)]
    \item E-step: Given parameters $\btheta^{(t)}$ at the $t$-th iteration, we compute the expectation as the Q-function:
    \begin{align} \label{eq:qbm}
        Q(\btheta \mid \btheta^{(t)}) = \E_{\lambda\sim p(\cdot \mid \bX,\btheta^{(t)})}[\log p(\lambda,\bX\mid \btheta)]. 
    \end{align}
    \item M-step: $\btheta^{(t+1)}$ is updated as $\argmax_{\btheta} Q(\btheta\mid \btheta^{(t)})$. 
\end{enumerate}   

{\bf \emph{The E-step}}: In the $t$-th iteration, in the same spirit as the PAC-Bayesian framework \citep{Gue19}, we apply the Gibbs posterior and assume that  
\begin{align}
p(\lambda,\bX \mid \btheta) & \propto  \exp(-\eta^{(t)} {L}_{\bX}(\lambda,\btheta))\pi_0(\lambda,\bX), \label{p-PAC}
\end{align}
for a tunable hyperparameter $\eta^{(t)}$, while $\pi_0(\cdot)$ is a prior density of the joint $(\lambda,\bX)$ independent of $\btheta$, representing our initial knowledge regarding $\lambda$ and $\bX$. In this expression, ${L}_{\bX}(\lambda,\btheta)$ implicitly depends on the observations $\bX$. 
The normalization constant is given by 
\begin{equation}\label{eq:Ctheta}
\resizebox{\linewidth}{!}{$ \begin{aligned}
    C(\btheta) & = \int_{(\lambda,\bX') \in \Lambda\times \mathfrak{X}}\exp(-\eta^{(t)} {L}_{\bX'}(\lambda,\btheta))\pi_0(\lambda,\bX') \ud (\lambda,\bX') \\
    & = \E_{(\lambda,\bX') \sim \pi_0}[ \exp(-\eta^{(t)} {L}_{\bX'}(\lambda,\btheta)) ].
\end{aligned}$}
\end{equation}


As a prior belief, we treat the observed $\bX$ as a typical sample such that the above average is (approximately) the same as the average over graphs by fixing $\bX$ (cf.\ \cref{sec:tio}).  
We assume that for each fixed $\bX$, there exists some prior distribution with density $p_{0,\bX}(\cdot)$ on $\Lambda$ such that:
\begin{align} \label{eq:elb}
\begin{aligned}
& \E_{(\lambda,\bX') \sim \pi_0}[ \exp(-\eta^{(t)} {L}_{\bX'}(\lambda,\btheta)) ] \\
\approx& \int_{\lambda \in \Lambda}\exp(-\eta^{(t)} {L}_{\bX}(\lambda,\btheta))p_{0,\bX}(\lambda) \ud (\lambda) \\
=& \E_{\lambda \sim p_{0,\bX}}[\exp(-\eta^{(t)} {L}_{\bX}(\lambda,\btheta)) \given \bX].
\end{aligned}
\end{align}
\emph{For notational simplicity, we use $p_0(\cdot)$ to denote $p_{0,\bX}(\cdot)$. Correspondingly, $\E_{\lambda \sim p_{0}}[\exp(-\eta^{(t)} {L}_{\bX}(\lambda,\btheta)) \given \bX]$ is denoted by $\E_{\lambda \sim p_{0}}[\exp(-\eta^{(t)} {L}_{\bX}(\lambda,\btheta))]$}. Hence, we have 
\begin{align} \label{eq:cml}
    C(\btheta) &= \E_{\lambda \sim p_{0}}[\exp(-\eta^{(t)} {L}_{\bX}(\lambda,\btheta))].
\end{align}
On the other hand, given $\btheta^{(t)}$, we estimate $p(\lambda \mid \bX, \btheta^{(t)})$ in the subscript of $\mathbb{E}$ in (\ref{eq:qbm}). From \cref{p-PAC}, we have
\begin{align*}
p(\lambda \mid \bX, \btheta^{(t)}) 
&= \frac{p(\lambda, \bX \mid \btheta^{(t)})}{p(\bX \mid \btheta\tc{t})}\\
&\propto \exp(-\eta\tc{t} {L}_{\bX}(\lambda,\btheta\tc{t}))\frac{\pi_0(\lambda,\bX)}{p(\bX \mid \btheta\tc{t})}.
\end{align*}%
We assume that there is a prior $p'_{0,t}(\cdot)$ such that $p'_{0,t}(\lambda)\propto \frac{\pi_0(\lambda,\bX)}{p(\bX \mid \btheta\tc{t})}$, which is independent of $\btheta$. However, it is a function of $t$ as $\btheta^{(t)}$ depends on $t$. By fixing $\bX$, the posterior is written as 
\begin{align} \label{eq:plm}
p(\lambda \mid \bX, \btheta^{(t)}) \propto \exp(-\eta^{(t)} {L}_{\bX}(\lambda,\btheta^{(t)}))p'_{0,t}(\lambda).\end{align}
In our framework, we do not need to estimate the normalization constant for \cref{eq:plm}.  


\begin{Remark} \label{rmk:fta}
From the above discussion, we see that priors $p_0(\cdot)$ and $p_{0,t}'(\cdot)$ play important roles. We discuss their choices in \cref{sec:exp} below. However, it is always desirable to have a weaker prior assumption, under which the optimizer can still be readily estimated. 
\end{Remark}

{\bf \emph{The M-step}}: We now analyze the $Q$-function in more detail. \emph{For convenience, we use $p_t(\lambda)$ to denote $p(\lambda\mid \bX,\btheta^{(t)})$.} 

Combining \cref{eq:cml,eq:plm}, we express $Q(\btheta \mid \btheta^{(t)})$ in (\ref{eq:qbm}) as: 
\begin{align*}
& Q(\btheta \mid \btheta^{(t)}) \\
= & \E_{\lambda\sim p_t}[\log \frac{\exp(-\eta^{(t)} {L}_{\bX}(\lambda,\btheta))\pi_0(\lambda,\bX)}{C(\btheta)}] 
\\
= & -\eta^{(t)}\E_{\lambda\sim p_t}[{L}_{\bX}(\lambda,\btheta)] + D - \log C(\btheta),
\end{align*}
where $D$ is a constant independent of $\btheta$. 

To estimate $\log C(\btheta)$, consider the Jensen inequality: 
\begin{align*}
-\eta^{(t)}\E_{\lambda\sim p_0}[{L}_{\bX}(\lambda,\btheta)]  \leq \log C(\btheta).
\end{align*}
This means that if $\log C(\btheta)$ is small, then necessarily so is $-\eta^{(t)}\E_{\lambda\sim p_0}[{L}_{\bX}(\lambda,\btheta)]$. On the other hand, \cite{Teh06} proposes to use $\E[\log Y] + \dfrac{\Var(Y)}{2\mathbb{E}(Y)^2}$ to approximate $\log \E[Y]$ for a random variable $Y$. This is derived from the second-order Taylor expansion of $\log Y$ at $\log \E[Y]$. In our case, we have
\begin{align} \label{eq:lca}
\begin{split}
    \log C(\btheta) & \approx -\eta^{(t)}\E_{\lambda\sim p_0}[{L}_{\bX}(\lambda,\btheta)] \\
         + & \frac{\Var(\exp(-\eta^{(t)}{L}_{\bX}(\lambda,\btheta)))}{2\parens*{\E_{\lambda\sim p_0}[\exp(-\eta^{(t)}{L}_{\bX}(\lambda,\btheta))]}^2}.  
\end{split}
\end{align}
If $-\eta^{(t)}\E_{\lambda\sim p_0}[{L}_{\bX}(\lambda,\btheta)]$ is the dominant component, then we may use $-\eta^{(t)}\E_{\lambda\sim p_0}[{L}_{\bX}(\lambda,\btheta)]$ as a proxy for $\log C(\btheta)$, which is more manageable. In \cref{subsubsec:numverify}, we numerically verify that this is indeed the case for our applications.

Hence, $Q(\btheta \mid \btheta^{(t)})$ is approximated by 
\begin{align*}
-\eta^{(t)}\Big(\E_{\lambda\sim p_t}[{L}_{\bX}(\lambda,\btheta)]- \E_{\lambda\sim p_0}[{L}_{\bX}(\lambda,\btheta)]\Big) + D.
\end{align*}
\emph{In summary}, if we disregard $\eta^{(t)}$ and $D$, which are independent of $\btheta$, we may minimize the following in the M-step: 
\begin{align}\label{eq:wqt}
\begin{split} 
    J(\btheta) & = \E_{\lambda\sim p_t}[{L}_{\bX}(\lambda,\btheta)]- \E_{\lambda\sim p_0}[{L}_{\bX}(\lambda,\btheta)]\\
    & = \int_{\lambda \in \Lambda} \big(p_t(\lambda)-p_0(\lambda)\big) {L}_{\bX}(\lambda,\btheta) \ud \lambda.
\end{split}
\end{align}

\subsection{The Proposed Algorithm: EMGNN}

To minimize $J(\btheta)$ in \cref{eq:wqt}, our strategy is to re-express it as an expectation. For this purpose, we introduce a proposal distribution. Let $q(\cdot)$ be the density function of a probability distribution on the sample space $\Lambda$ whose support includes that of $p_0$. Then we have:
\begin{align*}
    J(\btheta) & = \int_{\lambda \in \Lambda}q(\lambda) \frac{p_t(\lambda)-p_0(\lambda)}{q(\lambda)} {L}_{\bX}(\lambda,\btheta) \ud \lambda \\ & = 
    \E_{\lambda \sim q}[\frac{p_t(\lambda)-p_0(\lambda)}{q(\lambda)} {L}_{\bX}(\lambda,\btheta)].  
\end{align*}
We propose to minimize $J(\btheta)$ by first randomly drawing samples $\Lambda_{T'} = \{\lambda_1,\ldots,\lambda_{T'}\}$ according to the density $q(\cdot)$. Following that, we successively apply gradient descent to $\frac{p_t(\lambda_{t'})-p_0(\lambda_{t'})}{q(\lambda)} {L}_{\bX}(\lambda_{t'},\btheta)$ to update $\btheta$. Finally, given \cref{eq:plm}, $p_t(\lambda)$ can be approximated by an empirical distribution if we apply an MCMC method. The overall algorithm is summarized in \cref{alg:cap2} and illustrated in \cref{fig:emgnn}.

\begin{figure}
    \centering
    \includegraphics[width=0.98\columnwidth]{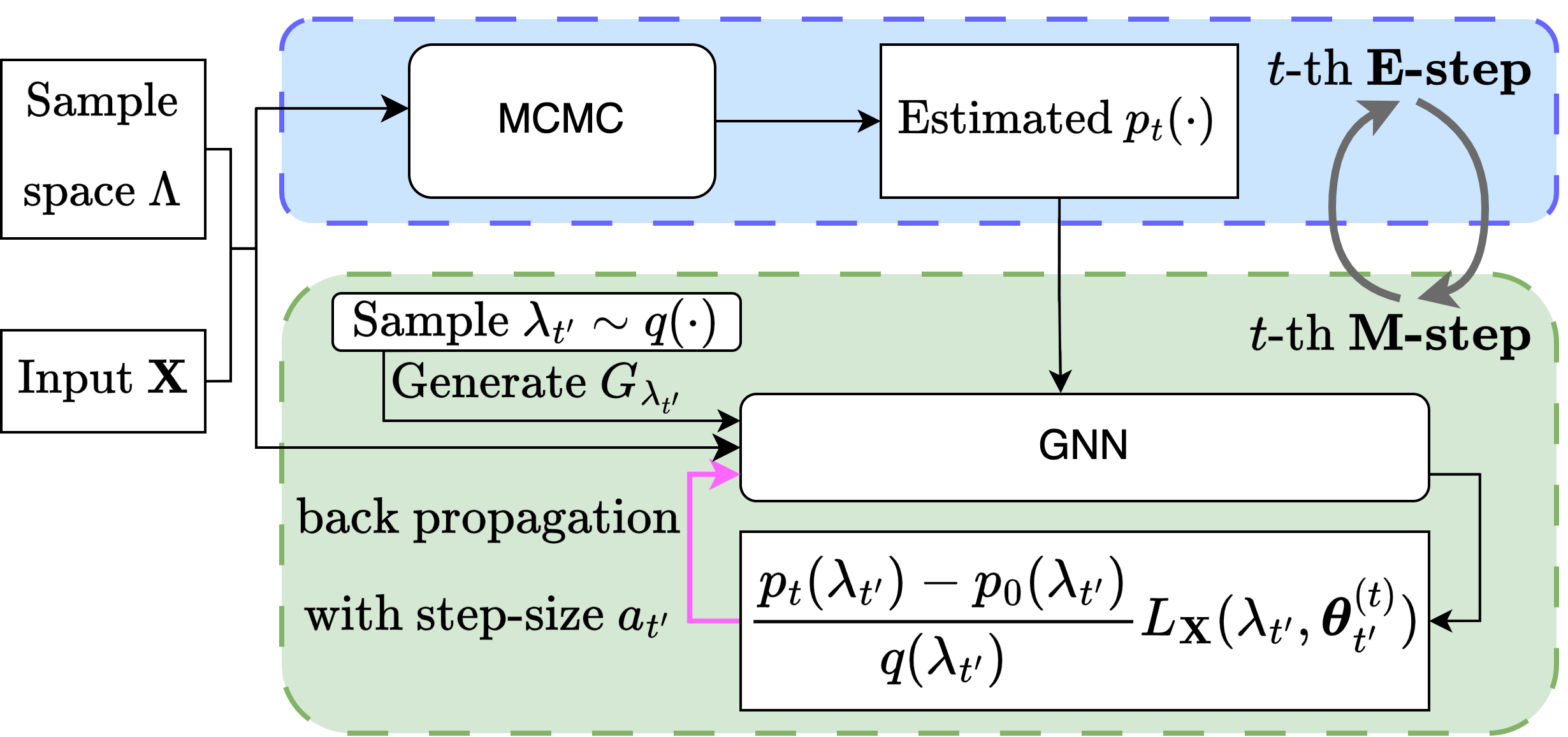}
    \caption{Illustration of EMGNN.}
    \label{fig:emgnn}
\end{figure}

\begin{algorithm}[!htb]
\caption{EMGNN}\label{alg:cap2}
\textbf{Input}: The observed graph $G$,\\
    The node features $\bx$,\\
    The number of EM iterations $T$, \\
    The number of epochs per M-step $T'$,\\
    The sample space $\Lambda$,\\
    Prior distributions $p_0, p'_{0,t}$ and $ q$,\\
    Function to convert samples to empirical distribution $g(\cdot)$,\\
    Task-specific function to generate $\lambda$ influenced graphs $h(\cdot)$,\\
    A non-increasing step-size $a_{t'}$.\\
\textbf{Output}: The learned representation $\bz$.\\
\textbf{Initialization}: Warm up $\Psi$ using $G$. 
\begin{algorithmic}[1]

\FOR{$t=1$ {\bfseries to} $T$}
  \STATE $\text{AcceptedList}^{(t)}$ $\gets$ MCMC($\Lambda$, $p'_{0,t}$) 
  \STATE $\text{EmpProbDict}^{(t)} \gets g(\text{AcceptedList}^{(t)})$ 
  \FOR{$t'=1$ {\bfseries to} $T'$} 
    \STATE Sample $\lambda_{t'} \sim q$. 
    \STATE $G_{\lambda_{t'}} \gets h(\lambda_{t'}, G)$ 

    \STATE Update via gradient descent: $\btheta_{{t'}+1}^{(t)} = \btheta_{t'}^{(t)} - a_{t'}\nabla(\frac{p_t(\lambda_{t'}) - p_0(\lambda_{t'})}{q(\lambda_{t'})}{L}_{\bX}(\lambda_{t'},\btheta_{t'}^{(t)})) $
    
    \STATE $\btheta_{{t'}+1}^{(t)}, \bz_{t'} \gets \Psi(\lambda_{t'}, \bx; \btheta_{t'}^{(t)})$


  \ENDFOR
  \STATE $\btheta^{(t+1)} \gets \btheta^{(t)}_{T'}$
  \STATE $t \gets t+1$
  
\ENDFOR

\STATE $\bz_{\text{final}} = \E_{\lambda \sim p_T(\cdot)}[\Psi(\lambda, \bx; \btheta^{(T)}_{T'})]$
\end{algorithmic}
\end{algorithm}


\begin{Remark} In practice, the choices of the prior distributions $p_{0}(\cdot), q(\cdot)$ and $ p'_{0,t}(\cdot)$ are hyperparameters. Moreover, in our experiments, $p'_{0,t}(\cdot)$ is set to be the same for every $t$. We also discretize the continuous sample space $\Lambda$ for simplicity in analysis and computation.
\end{Remark}

\begin{Remark}
If we algorithmically plug in the delta distribution supported on $\lambda_0$ and $p_0(\lambda_0) = 0$ for  $p'_{0,t}(\cdot)$ and $q(\cdot)$ respectively, then EMGNN reduces to the ordinary GNN model on the graph $G_{\lambda_0}$.
\end{Remark}




\begin{Remark}\label{rem:badlambda} Note that for the coefficient $\frac{p_t(\lambda)-p_0(\lambda)}{q(\lambda)}$, if $p_t(\lambda) < p_0(\lambda)$, then the loss ${L}_{\bX}(\lambda,\btheta)$ is to be made larger. Intuitively, in this case, a ``bad'' $\lambda$ is chosen. For the choice of $q(\cdot)$, in practice, we propose two options in \cref{sec:exp}: either the uniform distribution or $q(\cdot)=p_t(\cdot)$. Nonetheless, $q(\cdot)$ can also be other appropriate density functions. 
\end{Remark}

As we do not minimize $J(\btheta)$ directly, we justify the proposed approach under additional assumptions. We theoretically analyze the performance of the proposed (randomized) algorithm in lines 5-12 of \cref{alg:cap2}, denoted by $\mathcal{A}$. With samples $\Lambda_{T'}$, the algorithm $\mathcal{A}$ outputs $\widehat{\btheta} = \mathcal{A}(\Lambda_{T'})$. The following expression is considered in algorithm $\mathcal{A}$:
\begin{align*}
    J_{\Lambda_{T'}}(\widehat{\btheta}) = \frac{1}{T'}\sum_{\lambda_{t'} \in \Lambda_{T'}} \frac{p_t(\lambda_{t'})-p_0(\lambda_{t'})}{q(\lambda_{t'})}{L}_{\bX}(\lambda_{t'},\widehat{\btheta}).
\end{align*}
We assume that after translation and scaling by positive constant of ${L}_{\bX}(\lambda,\cdot)$ if necessary, the expression $\frac{p_t(\lambda)-p_0(\lambda)}{q(\lambda)}{L}_{\bX}(\lambda,\btheta)$ always belong to $[0,1]$. The following notions are well-known.

\begin{Definition}
A differential function $f$ is $\alpha$-Lipschitz if for all $x$ in the domain of $f$, we have $\norm{\nabla f(x)} \leq \alpha$. It is $\beta$-smooth if its gradient is $\beta$-Lipschitz.     
\end{Definition}

Denote by $\indicator{p_t(\cdot)\geq p_0(\cdot)}(\lambda)$ the indicator that is $1$ if $p_t(\lambda)\geq p_0(\lambda)$, and $0$ otherwise. Let $b_1 = \E_{\lambda\sim q}[\indicator{p_t(\cdot)\geq p_0(\cdot)}]$. Intuitively, it computes the measure of $\lambda$, for which $p_t(\cdot)$ is larger. On the other hand, let $b_2 = \E_{\lambda\sim q}[\frac{|p_t(\lambda)-p_0(\lambda)|}{q(\lambda)}]$, and $\gamma = \sup_{\lambda\in \Lambda} 1/q(\lambda)$. 

\begin{Theorem} \label{thm:afa}
Assume for any $\lambda$, the loss ${L}_{\bX}(\lambda,\cdot)$ is convex, $\alpha$-Lipschitz and $\beta$-smooth. Let $b_1,b_2,\gamma$ be defined as above. If for every $t'\leq T'$, the non-increasing step-size in the algorithm $\mathcal{A}$ satisfies $a_{t'} \leq \min\{2/(\beta\gamma), c/t'\}$ for a constant $c$, then there is a constant $C$ independent of $T',\alpha$ such that 
    \begin{align*}
        \abs*{\E_{\mathcal{A},\Lambda_{T'}}[J_{\Lambda_{T'}}(\widehat{\btheta})- J(\widehat{\btheta})]} \leq \epsilon = C \parens*{\frac{b_2^2\alpha^2}{T'}}^{\frac{1}{\beta\gamma c(1-b_1)+1}}.
    \end{align*}
\end{Theorem}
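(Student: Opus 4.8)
The plan is to recognize $\E_{\mathcal{A},\Lambda_{T'}}[J_{\Lambda_{T'}}(\widehat{\btheta})- J(\widehat{\btheta})]$ as the expected generalization gap of a single-pass stochastic gradient method and to control it through \emph{uniform algorithmic stability}, in the spirit of Hardt, Recht and Singer. Write $f(\lambda,\btheta) = \frac{p_t(\lambda)-p_0(\lambda)}{q(\lambda)}{L}_{\bX}(\lambda,\btheta)$ for the per-sample objective, so that $J(\btheta) = \E_{\lambda\sim q}[f(\lambda,\btheta)]$ is the population risk while $J_{\Lambda_{T'}}(\btheta) = \frac{1}{T'}\sum_{t'} f(\lambda_{t'},\btheta)$ is the empirical risk over the training sample $\Lambda_{T'}$, whose size $T'$ plays the role of the dataset size. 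Lines 5--12 of \cref{alg:cap2} perform one gradient step on a fresh draw $\lambda_{t'}\sim q$ at iteration $t'$, so $\widehat{\btheta}$ is a single-pass SGD iterate. The first reduction is the standard identity bounding the expected gap (in absolute value) by the uniform stability constant $\epsilon_{\mathrm{stab}}$ of $\mathcal{A}$, i.e.\ the worst-case expected change in $f$ when one of the $T'$ training samples is resampled; it then remains to bound $\epsilon_{\mathrm{stab}}$.

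For the stability bound I would couple two runs on samples differing in a single index $i$ and track $\delta_{t'} = \norm{\btheta_{t'}-\btheta'_{t'}}$. The essential new feature here, flagged in \cref{rem:badlambda}, is that the coefficient $\frac{p_t(\lambda)-p_0(\lambda)}{q(\lambda)}$ can be negative: when $p_t(\lambda)\geq p_0(\lambda)$ the map $f(\lambda,\cdot)$ is convex (a nonnegative multiple of the convex ${L}_{\bX}$), whereas for a ``bad'' $\lambda$ it is concave. Using $\abs{p_t-p_0}\leq 1$ on the discretized $\Lambda$, the effective smoothness of $f(\lambda,\cdot)$ is at most $\gamma\beta$ and its gradient norm is at most $\tfrac{\abs{p_t-p_0}}{q}\alpha$, whose $q$-average is $b_2\alpha$. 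By the standard expansivity lemmas for gradient steps, a convex step with $a_{t'}\leq 2/(\beta\gamma)$ is non-expansive, while any $\gamma\beta$-smooth step is at worst $(1+a_{t'}\gamma\beta)$-expansive. Since the convex event has $q$-probability $b_1$ and is independent of $\delta_{t'}$, the expected one-step expansion factor is at most $1+(1-b_1)a_{t'}\gamma\beta$.

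This yields the recursion: before step $i$ we have $\delta=0$; at step $i$ the resampling injects $\E[\delta_{i+1}]\leq 2a_i b_2\alpha$; and for $t'>i$, $\E[\delta_{t'+1}]\leq (1+(1-b_1)a_{t'}\gamma\beta)\E[\delta_{t'}]$. With $a_{t'}\leq c/t'$ the product telescopes, $\prod_{t'=i+1}^{T'}(1+(1-b_1)\gamma\beta c/t')\leq (T'/i)^{(1-b_1)\gamma\beta c}$, so $\E[\delta_{T'}\mid i]\lesssim b_2\alpha\,(T')^{(1-b_1)\gamma\beta c}\, i^{-1-(1-b_1)\gamma\beta c}$. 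Following the stability argument, I would discard the first $t_0$ steps (bounding their contribution by $t_0/T'$ via $f\in[0,1]$) and average the Lipschitz estimate $b_2\alpha\,\E[\delta_{T'}\mid i]$ over $i>t_0$, obtaining
\begin{align*}
\epsilon_{\mathrm{stab}} \lesssim \frac{t_0}{T'} + \frac{b_2^2\alpha^2}{(1-b_1)\gamma\beta}\cdot\frac{1}{T'}\parens*{\frac{T'}{t_0}}^{(1-b_1)\gamma\beta c}.
\end{align*}

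Finally I would optimize over the cut-off $t_0$. Writing $\rho = (1-b_1)\gamma\beta c$ and balancing the two terms gives $t_0 \asymp (b_2^2\alpha^2)^{1/(\rho+1)}(T')^{\rho/(\rho+1)}$, whence $\epsilon_{\mathrm{stab}} \lesssim (b_2^2\alpha^2/T')^{1/(\rho+1)}$ with $\rho+1 = \beta\gamma c(1-b_1)+1$, exactly the claimed rate; the suppressed constant depends only on $b_1,\gamma,\beta,c$ and not on $T'$ or $\alpha$, as required for $C$. The main obstacle I anticipate is the convex/concave bookkeeping of the second paragraph: one must verify that the sign of the coefficient is the only thing governing convexity of each step, that the expansion factors combine in expectation to the rate $(1-b_1)\gamma\beta c$ rather than the naive $\gamma\beta c$, and that the uniform bound $\abs{p_t-p_0}/q\le\gamma$ is precisely what makes the restriction $a_{t'}\le 2/(\beta\gamma)$ the right one. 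The remaining ingredients --- the stability-to-generalization reduction and solving the recursion --- are routine once this is in place.
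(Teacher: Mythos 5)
Your proposal is correct and follows essentially the same route as the paper's proof: both reduce the expected gap to uniform stability in the sense of Hardt--Recht--Singer, couple two runs differing in a single sample, split each gradient step into the convex case (probability $b_1$, non-expansive under $a_{t'} \le 2/(\beta\gamma)$) and the non-convex case (probability $1-b_1$, at worst $(1+a_{t'}\beta\gamma)$-expansive), and solve the resulting recursion under the $c/t'$ step-size schedule with an optimized burn-in $t_0$, yielding exactly the exponent $1/(\beta\gamma c(1-b_1)+1)$. The remaining discrepancies are cosmetic bookkeeping --- you fix the position of the differing sample and average over it, whereas the paper follows Hardt et al.'s per-step $1/n$ selection probability, and you carry $b_2\alpha$ where the paper carries $\alpha\gamma$ in the Lipschitz reduction --- all of which is absorbed into the constant $C$.
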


\begin{Remark} From the result, we see that if $b_1$ is close to $1$, i.e, the set $\{\lambda \mid p_t(\lambda) \geq p_0(\lambda)\}$ has a large measure, then the expected error decays at a rate close to $T'^{-1}$. 
\end{Remark}

\subsection{A Brief Discussion on Testing}

As our framework deals with a distribution of graphs, during testing, the final learned representation is
$\bz_{\text{final}} = \E_{\lambda \sim p_T}[\Psi(\lambda, \bx;\btheta^{(T)}_{T'})]$ (cf.\ \cref{thm:iit}). The learned model parameters are a particular realization of the possible random models that align with the observed data $\bX$ and the multiple graphs influence the final embeddings based on their respective likelihoods. The embedding $\bz_{\text{final}}$ is subjected to a softmax operation to obtain $\hat{\by}$ for node classification tasks, while a READOUT function is applied for graph-level tasks.

\section{EXPERIMENTS} \label{sec:exp}
We study node classification for heterogeneous and homogeneous graphs. In \cref{subsec:chemdataset}, \cref{sec:dod} and \cref{sec:com}, we study chemical datasets, provide dataset and implementation details, and discuss model complexity.  

\subsection{Heterogeneous Graphs}\label{subsec:heterograph}

\subsubsection{The Experimental Setup and Baselines} \label{subsubsec:baselinehetero}

As outlined in \cref{sec:dfd}, it is more natural to apply the framework to heterogeneous graphs. Let $G$ be such a graph and assume with $\omega$ edge types. To apply the framework, it suffices to specify the sample parameter space. We introduce a vector of latent parameters $\bm{\lambda}=\{\lambda_1, \dots, \lambda_{\omega}\}$, where each $\lambda_i\in [0,1]$ and $\sum_{i=1}^{\omega}\lambda_i =1$. The number $\lambda_i$ is the weight for the $i$-th edge type. Hence, the sample parameter space is the $(w-1)$-simplex, denoted by $\Lambda$. For a chosen $\blambda$,
the associated weighted graph $G_{\bm{\lambda}}$ has adjacency matrix 
\begin{equation}\label{eqn:Alambda}
    A_{\bm{\lambda}} = \sum_{i=1}^{\omega}\lambda_i A_i,
\end{equation} where $A_i$ is the edge type specific adjacency matrix corresponding to the $i$-th edge type. Note that when any $\lambda_i=0$, the edges of the associated edge types are removed. For our model, we discretize the interval $[0,1]$ in increments of $0.05$, and the resulting discretized sample parameter space is denoted by $\widehat{\Lambda}$.

The heterogeneous graph datasets used are the same as those in \citet{yun2019gtn} and \citet{Lee2022SGATSG}. These datasets include two citation networks DBLP and ACM, as well as a movie dataset IMDB. They have similar edge type structures (cf.\ \cref{sec:hed}). For example, IMDB has three node types (Movie (M), Actor (A), Director (D)) and two edge types (MD, MA). Hence, the parameter is $\blambda = (\lambda, 1-\lambda)$.

We assess our approach against five baseline models. Specifically, GAT and GCN are designed for homogeneous graphs, while GTN \citep{yun2019gtn}, Simple-HGN \citep{hgb} and SeHGNN \citep{Yang2023SimpleAE} are state-of-the-art models developed for heterogeneous graphs. Our framework, applicable to diverse GNNs, is exemplified with a GCN backbone, named EM-GCN. We consider different variants of EM-GCN, based on choices of $p_0(\cdot), p'_{0,t}(\cdot), q(\cdot)$, as summarized in \cref{tab:ppq}. 
\begin{table}[!htb]
    \centering
    \caption{Variants of EM-GCN with different $p_0(\cdot), p'_{0,t}(\cdot), q(\cdot)$.}
    \label{tab:ppq}
    \resizebox{0.75\columnwidth}{!}{
    \begin{tabular}{ccc|c}
    \toprule
        $p_0(\cdot)$ & $p'_{0,t}(\cdot)$ & $q(\cdot)$ & Model  \\
        \midrule
        Unif($\widehat{\Lambda}$) & Unif($\widehat{\Lambda}$) & $p_t(\cdot)$ & EM-GCN[PT] \\
        Unif($\widehat{\Lambda}$) & Unif($\widehat{\Lambda}$) & Unif($\widehat{\Lambda}$) & EM-GCN[PO] \\
        $\delta_{\lambda_0}$ & Unif($\widehat{\Lambda}$) & $p_t(\cdot)$ & EM-GCN[PD] \\
        $\delta_{\lambda_0}$ & Unif($\widehat{\Lambda}$) & Unif($\widehat{\Lambda}$) & EM-GCN[PH] \\
        
        \bottomrule
    \end{tabular}}
\end{table}

\subsubsection{Results}

\begin{table*}[!htb]
\centering
    \caption{Heterogeneous node classification task. The results shown are averaged over ten runs and accompanied by the standard deviation. The best performance is boldfaced and the second-best performance is underlined.}
    \label{results:heterogeneous}
\resizebox{0.8\textwidth}{!}{
\begin{tabular}{@{}ccccccc@{}}
\toprule
                 & \multicolumn{2}{c}{IMDB}          & \multicolumn{2}{c}{ACM}                           & \multicolumn{2}{c}{DBLP}                          \\ 
                 & Micro-F1               & Macro-F1               & Micro-F1               & Macro-F1               & Micro-F1               & Macro-F1               \\ \midrule
GCN              & 61.91 $\pm$ 0.67          & 60.91 $\pm$ 0.57          & 91.92 $\pm$ 0.40          & 92.00 $\pm$ 0.41          & 94.60 $\pm$ 0.31          & 93.88 $\pm$ 0.36          \\
GAT              & \udcloser{63.54 $\pm$ 1.10}    & 61.87 $\pm$ 0.95    &  92.61 $\pm$ 0.36    & \udcloser{92.68 $\pm$ 0.36} & 94.48 $\pm$ 0.22          & 93.74 $\pm$ 0.27          \\
GTN              & 60.58 $\pm$ 2.10   & 59.12 $\pm$ 1.58 & 92.12 $\pm$ 0.62 & 92.23 $\pm$ 0.60 & 94.17 $\pm$ 0.26 & 93.59 $\pm$ 0.40 \\
Simple-HGN  & 58.91 $\pm$ 1.06          & 58.30 $\pm$ 0.34          & \textbf{92.73 $\pm$ 0.21} & 92.56 $\pm$ 0.42   & 94.48 $\pm$ 0.38          & 93.69 $\pm$ 0.32          \\
SeHGNN  & 62.13 $\pm$ 2.38         & 60.62 $\pm$ 1.95         & 92.45 $\pm$ 0.17          & 92.51 $\pm$ 0.16          &  94.86 $\pm$ 0.14    & 94.14 $\pm$ 0.19   \\ \midrule
EM-GCN[PT]       & \textbf{64.78 $\pm$ 1.24} & \textbf{63.36 $\pm$ 0.80} & \udcloser{92.70 $\pm$ 0.26}        & \textbf{92.78 $\pm$ 0.26}        & \textbf{95.06 $\pm$ 0.39} & \textbf{94.41 $\pm$ 0.45} \\ 
EM-GCN[PO] & 63.35 $\pm$ 0.79	& \udcloser{62.25 $\pm$ 0.59}	& 92.35 $\pm$ 0.38	& 92.45 $\pm$ 0.38	& 94.95 $\pm$ 0.24	& 94.28 $\pm$ 0.28 \\
EM-GCN[PD] & 62.49 $\pm$ 0.87 	& 61.55 $\pm$ 0.71 	& 92.31 $\pm$ 0.43 & 92.41 $\pm$ 0.42 &   94.89 $\pm$ 0.17	&   94.15 $\pm$ 0.23  \\
EM-GCN[PH] & 62.01 $\pm$ 0.55 	& 61.15 $\pm$ 0.46 	& 92.18 $\pm$ 0.52 & 92.29 $\pm$ 0.52 &  \udcloser{ 95.02 $\pm$ 0.19} 	&  \udcloser{ 94.34 $\pm$ 0.20 } \\
\bottomrule
\end{tabular}}
\end{table*}

Results are shown in \cref{results:heterogeneous}. Similar to recent findings \citep{hgb}, GCN and GAT are observed to perform competitively against models designed for heterogeneous graphs such as GTN under appropriate settings. Meanwhile, EM-GCN[PT] consistently outperforms other variants in our framework, in both micro and macro F1 scores. In particular, the superior performance of EM-GCN[PT], EM-GCN[PO], EM-GCN[PD], and EM-GCN[PH] compared to GCN indicates the effectiveness of our distribution-based framework. 

EM-GCN[PT] also often surpasses baselines models with attention mechanisms, namely GAT, Simple-HGN, SeHGNN, and GTN, despite not incorporating any attention mechanisms. This could be attributed to the construction of multiple graphs, which may form instances whose information is similar to what is achieved with semantic attention. In addition, the model may extract additional useful interactions from other graph instances, enhancing its performance. 

\subsubsection{Further Analysis} \label{sec:fan}
{\bf \emph{Ablation study}}: For EM-GCN[PD] and EM-GCN[PH], $\lambda_0$ for the delta function is set to be any $\lambda \in \Lambda \backslash \widehat{\Lambda}$. Consequently, $p_0(\cdot)$ will be 0 with probability 1 w.r.t $q(\cdot)$ on $\widehat{\Lambda}$. Hence, for these variants, there is no ``bad'' $\lambda$ such that the corresponding iteration increases ${L}_{\bX}(\cdot,\cdot)$ (cf.\ \cref{rem:badlambda}).

From \cref{results:heterogeneous}, we see that EM-GCN[PT] outperforms EM-GCN[PD], along with EM-GCN[PO] frequently outperforming EM-GCN[PH]. They indicate that increasing the loss for a ``bad'' $\lambda$ is beneficial as it penalizes deviations from desirable graphs.

{\bf \emph{The learned distribution}}: We examine the learned empirical distributions, depicted in \cref{fig:empdistr}. Across all datasets, we notice that the empirical probability of $\lambda$ is relatively high within the range of approximately $[0.4, 0.6]$. This suggests a possible explanation for the decent performance of GCN on a single graph with uniform edge weights.

For IMDB, \cref{eqn:Alambda} is of the form $\lambda A_{\text{MD}} + (1-\lambda)A_{\text{MA}}$. We observe that $\lambda=1$ has a relatively lower probability compared to $\lambda=0$. When $\lambda=1$, it implies that edges in $A_{\text{MA}}$ are all removed. This indicates that the MA relation is more crucial than the MD relation. This observation might be due to MA having an edge density more than triple that of MD. Similarly, for the ACM dataset, where the disparity in edge density is also substantial, \cref{eqn:Alambda} takes the form $\lambda A_{\text{PA}} + (1-\lambda)A_{\text{PS}}$. Here, P, A, S standard for Paper, Author, Subject. The shift of $\lambda$ towards $1$ indicates that the PA relation is more significant than PS, and agrees with the higher density of PA type. The results demonstrate that our approach implicitly captures such key graph features.  
\begin{figure}[!htb]
    \centering
    \begin{subfigure}[b]{0.22\textwidth}
        \includegraphics[width=\textwidth]{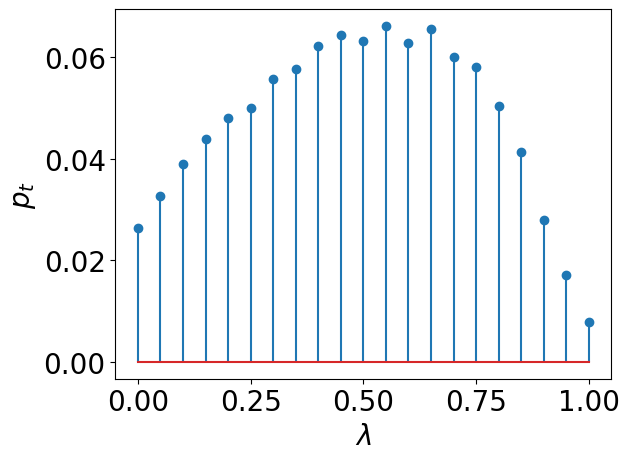}
        \caption{IMDB}
    \end{subfigure}
    \begin{subfigure}[b]{0.22\textwidth}
        \includegraphics[width=\textwidth]{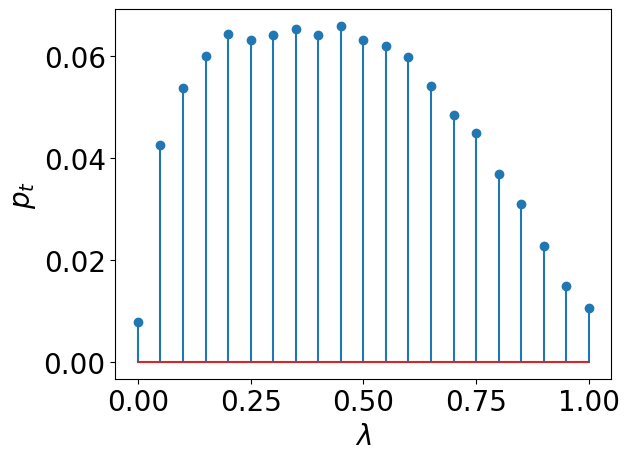}\caption{DBLP}
    \end{subfigure}
    \begin{subfigure}[b]{0.22\textwidth}
        \includegraphics[width=\textwidth]{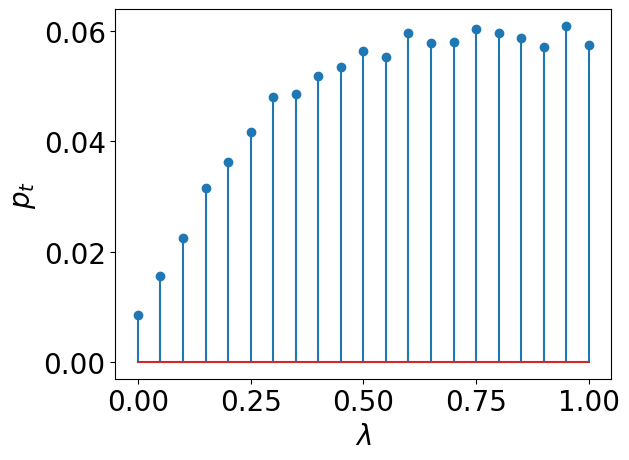}\caption{ACM}
    \end{subfigure}

    \caption{Empirical distribution of $p_t(\cdot)$ from the final E-step.}%
    \label{fig:empdistr}%
\end{figure}
\begin{table*}[!htb]
\centering
\caption{Node classification on homogeneous graphs following the setup of \citet{bgcn}.}\label{tab:homo_result_fixed}
\resizebox{1\textwidth}{!}{
\begin{tabular}{@{}lccccccccc@{}}
\toprule
      & \multicolumn{3}{c}{Cora}                                                    & \multicolumn{3}{c}{Citeseer}                                                & \multicolumn{3}{c}{Pubmed}                                                  \\ 
      & 5 labels                & 10 labels               & 20 labels               & 5 labels                & 10 labels               & 20 labels               & 5 labels                & 10 labels               & 20 labels               \\ \midrule
GCN   & 74.62 $\pm$ 0.54    & 75.30 $\pm$ 0.47          & 81.37 $\pm$ 0.31    & 54.24 $\pm$ 1.26          & 66.07 $\pm$ 0.68          & 70.19 $\pm$ 0.46          & 69.96 $\pm$ 0.65          & 72.96 $\pm$ 0.58          & 78.45 $\pm$ 0.44    \\
DropEdge & 74.79 $\pm$ 0.56 & 75.88 $\pm$ 0.19 & \udcloser{81.46 $\pm$ 0.64} & 54.44 $\pm$ 2.54 & 67.59 $\pm$ 1.41 & 71.23 $\pm$ 1.26& 71.69 $\pm$ 0.50 & 73.14 $\pm$ 0.33 & \udcloser{78.50 $\pm$ 0.54}\\ 
RSGNN & \bf{76.80 $\pm$ 3.19}          & \textbf{78.23 $\pm$ 2.62} & 80.79 $\pm$ 0.91          & \textbf{59.97 $\pm$ 1.89} & 68.41 $\pm$ 0.94         & 69.73 $\pm$ 0.53          & \udcloser{70.45 $\pm$ 0.78}    & 70.92 $\pm$ 0.86          & 77.55 $\pm$ 0.46          \\
BGCN  & \udcloser{75.97 $\pm$ 0.54} & 76.52 $\pm$ 0.50          & 81.18 $\pm$ 0.48          & 56.58 $\pm$ 0.96          & \udcloser{70.61 $\pm$ 0.69}    & \udcloser{72.11 $\pm$ 0.40}    & 70.51 $\pm$ 1.61          & \udcloser{73.36 $\pm$ 1.23}    & 76.55 $\pm$ 0.65          \\ \midrule
EM-GCN & 74.44 $\pm$ 0.76        & \udcloser{76.71 $\pm$ 0.46}    & \textbf{82.24 $\pm$ 0.48} & \udcloser{58.04 $\pm$ 2.24}    & \textbf{70.65 $\pm$ 1.10} & \textbf{72.13 $\pm$ 0.96} & \textbf{74.03 $\pm$ 0.55} & \textbf{74.93 $\pm$ 0.24} & \textbf{78.96 $\pm$ 0.38} \\ \bottomrule
\end{tabular}}
\end{table*} 

\subsection{Homogeneous Graphs} \label{sec:hgr}

\subsubsection{The Experimental Setup and Baslines}
For $G=(V,E)$, we follow \cref{sec:dfd} for the design of $\Lambda$. Specifically, we propose to parametrize graphs by a pair $\bm{\lambda}=\{\lambda_1, \lambda_{2}\}$ with $\lambda_1,\lambda_2 \in [0,0.2]$. Here, $\lambda_1$ is the probability of randomly removing edges from the graph, to account for possible ``noisy edges'' in the observed graphs. On the other hand, $\lambda_2$ is the probability of randomly introducing edges from a pre-constructed subset $E'$ of all missing edges. More specifically, there are $n(n-1)/2$ possible edges between pairs of distinct nodes of size $|V|=n$. For each pair of nodes $v,v'$ without an edge connection in $G$, we compute the cosine similarity of their node features. The edge set $E'$ is obtained from including node pairs whose cosine similarities are above a threshold $\tau$ (see \cref{sec:hda}). It is intuitively considered as the set of  ``likely'' missing edges based on feature similarities. Notice that $\blambda$ does not determine a unique graph, we need to slightly modify \cref{alg:cap2} when applying MCMC (see \cref{sec:hca}).

The most relevant benchmark is BGCN \citep{bgcn} for homogeneous graphs, based on a Bayesian approach to infer the graph distribution. As our construction of $\Lambda$ involves edge removal and addition, we also consider DropEdge \citep{Ron20} and RSGNN \citep{Dai22}, where the former randomly removes edges at each epoch and the latter learns a denser graph using a trained link predictor. The experimental setup follows exactly that from \citet{bgcn}, wherein for each dataset, we evaluate the performance of
the algorithms under limited data scenarios
where only $10$ or $5$ labels per class are available. 

\subsubsection{Results}

Based on \cref{results:heterogeneous} and the ablation study in \cref{sec:fan}, we use the EM-GCN[PT] variant for EM-GCN. Results are shown in \cref{tab:homo_result_fixed}. Overall, across all datasets, EM-GCN surpasses the performance of BGCN and DropEdge and frequently outperforms RSGNN. In principle, DropEdge and RSGNN do not leverage the potential of a distribution of graphs, which is fundamentally different from our approach.

\subsubsection{Further Analysis}
{\bf \emph{Heterophilic graphs}}: 
Recall that a graph is heterophilic if many edges are connecting nodes with different labels. In particular, nodes from the same class are not grouped. Hence, as BGCN has a clustering mechanism, we expect that it might face challenges for heterophilic graphs. On the other hand, based on the construction of $\Lambda$, EM-GCN may generate $\blambda$ such that the associated graph reduces inter-class edges while adding intra-class edges. Hence, we expect EM-GCN should significantly outperform BGCN for heterophilic graphs, which is verified by results in \cref{tab:heterophilic_result}. 

EM-GCN is based on the ``unsuitable'' backbone GCN, which suffers from the same problem as BGCN. Even so, the performance of EM-GCN is much closer to benchmarks ACM-GCN+ \cite{luan2022revisiting} and ACMP \cite{Wan23} dedicated to heterophilic graphs. This validates our proposed construction of $\Lambda$. Furthermore, our framework can be applied to other backbone models and potentially garner performance improvements. As such, we introduce a variant of our model EM-ACM, with ACM-GCN+ as the backbone model. From \cref{tab:heterophilic_result}, we see that EM-ACM generally outperforms its SOTA backbone. It is reasonable to attribute this to the use of a distribution of graphs.   
\begin{table}[!htb]
\centering
\caption{Node classification on heterophilic graphs. The setup and data splits follow \citet{pei2020geomgcn}.}\label{tab:heterophilic_result}
\resizebox{0.88\columnwidth}{!}{
\begin{tabular}{@{}lccc@{}}
\toprule
         & Texas          & Wisconsin      & Cornell        \\ \midrule
GCN      & 55.14 $\pm$ 5.16 & 51.76 $\pm$ 3.06 & 60.54 $\pm$ 5.30 \\
DropEdge & 57.57 $\pm$ 4.94 & 57.45 $\pm$ 5.47 & 60.54 $\pm$ 5.30 \\
RSGNN    &  68.38 $\pm$ 5.26	& 68.82 $\pm$ 7.25	& 60.96 $\pm$ 6.90 \\
BGCN     & 57.96 $\pm$ 6.77 & 61.37 $\pm$ 4.72 & 56.48 $\pm$ 6.67 \\ 
\midrule
ACM-GCN+ & \udcloser{86.76 $\pm$ 4.26} &	\udcloser{86.86 $\pm$ 2.91}	& 84.05 $\pm$ 7.88\\
ACMP & 86.20 $\pm$ 3.00 &	86.10 $\pm$ 4.00	& \bf{85.40 $\pm$ 7.00} 
\\\midrule
EM-GCN    & 79.46 $\pm$ 4.26 & 83.73 $\pm$ 4.34 & 77.30 $\pm$ 4.10 \\ 
EM-ACM & \bf{88.38 $\pm$ 5.14} &	\bf{87.06 $\pm$ 2.51}	& \udcloser{85.14 $\pm$ 6.54} \\ \bottomrule
\end{tabular}}
\end{table}

{\bf \emph{Robustness}}: As discussed in \cref{sec:dfd}, our approach might be resistant to errors in the observed graph, as we are not focusing on a single graph. To verify, we consider the Cora graph $G$ and randomly perturb $r\%$ of edges (by adding new edges and removing existing edges) for $5\leq r\leq 30$. We compare GCN and EM-GCN while observing only the perturbed graph. The results in \figref{fig:perturbation} agree with our speculation that EM-GCN always has a better performance and the gap in accuracies widens as $r$ increases. 

\begin{figure}
    \centering
\includegraphics[width=0.72\columnwidth]{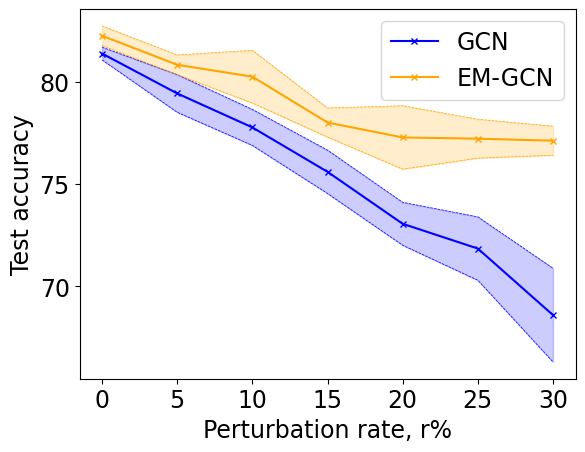}
    \caption{Performance comparison for graph perturbation.}
    \label{fig:perturbation}
\end{figure}

\section{Conclusion}
In this paper, we explore using a distribution of parametrized graphs for training a GNN in an EM framework. Through a probabilistic framework, we handle the uncertainty in graph structures stemming from various sources. Our approach enables the model to handle multiple graphs where the prediction loss is utilized to estimate the likelihood of the graphs. The performance is proved as we provide it with a wider array of graphs, which it can then sift through to acquire more valuable information or remove noise.

\newpage
\section{Impact Statement}
This paper introduces a research work aimed at pushing the boundaries of the Machine Learning field. There are many potential societal consequences of our work, none of which we feel must be specifically highlighted here.

\bibliography{ref}
\bibliographystyle{icml2024}

\newpage
\appendix
\onecolumn



This appendix contains details about notations, datasets, implementation, for chemical datasets, model complexity, and mathematical proofs referenced in the main text of the paper.

\section{List of notations} \label{sec:lon}

For easy reference, we list the most used notations in \cref{tab:lon}.

\begin{table}[!ht]
\caption{List of notations}
\label{tab:lon}
\begin{center}
\scalebox{1}{\begin{tabular}{ c | c } 
  \toprule
  Graph, Vertex set, Edge set & $G \text{ (with subscripts)}, V, E$ \\
 \midrule
   Adjacency matrix & $A \text{ (with subscripts)}$ \\
 \midrule
  Nodes & $v,v'$ \\
 \midrule 
  Shift operator & $S, S_{\lambda}$\\
  \midrule 
  Sample parameter space & $\Lambda$ \\
  \midrule
  Sample parameter & $\lambda \text{ (with subscripts)}$, $\blambda$  \\
  \midrule
  Distributions on the sample space & $\mu \text{ (with subscripts)}$  \\
  \midrule
  Density functions & $p, q$ (with subscripts) \\
  \midrule
  GNN model parameters & $\btheta$ \\
  \midrule
  Features and labels & $\bx, \by, \widehat{\by}, \bX$ \\
  \midrule
  Loss & $\ell(\cdot,\cdot), L_{\bX}(\lambda,\bx;\btheta)$ \\
  \midrule
  GNN model & $\Psi$ \\
  \midrule
  Number of GNN layers & $K$ \\
 \bottomrule
\end{tabular}} 
\end{center}
\end{table}

\section{Graph Regression Task on Chemical Datasets}\label{subsec:chemdataset}

Conventional molecular graph representations mirror a molecule's Lewis structure, with atoms as nodes and chemical bonds as edges. This representation falls short in capturing variations in molecular properties resulting from different three-dimensional (3D) spatial arrangements when molecules share the same topology, as seen in cases like cis-trans isomers. Moreover, molecules inherently possess uncertainty due to their quantum mechanical properties, particularly concerning electron orbitals. Hence, using a distribution of graphs for learning in such cases is a sensible choice.

The process of generating different molecular graphs begins with the acquisition of coarse 3D coordinates of the atoms in a molecule using RDKit\footnote{A cheminformatics tool. https://www.rdkit.org}. Following that, the interatomic Euclidean distances between all atoms within the molecule are calculated. A parameter, $\lambda$ is then introduced to define a threshold range, $[0, \lambda]$, for determining node connections and generating multiple graph instances. The notion of employing thresholding based on the interatomic distance between nodes in molecular graphs has been previously documented in works such as \cite{hmg2020} and \cite{shen2023molecular}. The former introduced a cut-off distance hyperparameter to construct heterogeneous molecular graphs. Meanwhile, our approach aligns more closely with the latter, where the Vietoris-Rips complex and thresholding are used to form a series of $G_\lambda$ graphs. However, in \cite{shen2023molecular}, they utilized five non-overlapping, manually adjusted intervals for thresholding and adopted a computationally intensive multi-channel configuration to learn from the five generated graphs. There are also works such as \cite{thomas2018tensor}, where molecules are treated as 3D point clouds and a radius is set to specify interacting vertices. 

The graph construction process may involve adding new edges, connecting distant nodes, or removing existing edges. Ideally, the model should prioritize ``useful'' graph realizations and assign a low probability to less beneficial ones, effectively discarding them.

\begin{table*}[!htb]
\centering
    \caption{Graph regression task on molecular datasets. Average test rmse reported, the lower the better.}\label{tab:result_chem}
\vspace{2mm}
\resizebox{0.95\textwidth}{!}{
\begin{tabular}{ccccccc}\toprule
            & \multicolumn{3}{c}{Random split}                                                        & \multicolumn{3}{c}{Scaffold split}                                          \\
Datasets     & FreeSolv                 & ESOL                      & Lipophilicity              & FreeSolv                 & ESOL                     & Lipophilicity   \\ \midrule
GCN         & \udcloser{1.157 $\pm$ 0.215}          & \udcloser{0.652 $\pm$ 0.073}           & 0.707 $\pm$ 0.030            & \udcloser{2.618 $\pm$ 0.298}          & 0.876 $\pm$ 0.037          & 0.760 $\pm$ 0.009 \\
GAT         & 1.873 $\pm$ 0.522          & 0.837 $\pm$ 0.101           & 0.704 $\pm$ 0.058            & 2.942 $\pm$ 0.591          & 0.907 $\pm$ 0.034          & 0.777 $\pm$ 0.037 \\
Weave       & 1.497 $\pm$ 0.251          & 0.798 $\pm$ 0.088           & 0.789 $\pm$ 0.059            & 3.129 $\pm$ 0.203          & 1.104 $\pm$ 0.063          & 0.844 $\pm$ 0.031 \\
MPNN        & 1.388 $\pm$ 0.404          & 0.703 $\pm$ 0.075           & \udcloser{0.640 $\pm$ 0.025}            & 2.975 $\pm$ 0.775          & 1.117 $\pm$ 0.058          & \bf{0.735 $\pm$ 0.019} \\
AttentiveFP & 1.275 $\pm$ 0.289          & 0.673 $\pm$ 0.085           & 0.719 $\pm$ 0.042            & 2.698 $\pm$ 0.297          & \udcloser{0.855 $\pm$ 0.029} & 0.762 $\pm$ 0.022 \\
GIN         & 1.678 $\pm$ 0.494          & 0.792 $\pm$ 0.097           & 0.716 $\pm$ 0.073            & 2.957 $\pm$ 0.696          & 0.990 $\pm$ 0.057          & 0.770 $\pm$ 0.021 \\ \midrule
EM-GCN$^*$  & \bf{0.936 $\pm$ 0.162} & \bf{0.606 $\pm$ 0.041} & \bf{0.639 $\pm$ 0.028} & \bf{2.189 $\pm$ 0.128} & \bf{0.834 $\pm$ 0.027}    &   \udcloser{0.743 $\pm$ 0.013}     \\   \bottomrule 
\end{tabular}}
\end{table*}

\subsection{Baselines and Datasets}
MoleculeNet\footnote{https://moleculenet.org/datasets-1} is a popular benchmark for molecular machine learning, encompassing multiple datasets to be tested on a diverse of molecular properties. For our evaluation, we specifically selected the datasets FreeSolv, ESOL, and Lipophilicity, all of which are designed for graph regression tasks.

We compared our approach against standard models for molecular properties prediction that do not incorporate transfer learning from a larger dataset such as Zinc15\footnote{A database of purchasable drug-like compounds; https://zinc.docking.org/tranches/home/}. The selected baseline models for this comparison included Weave \citep{Kearnes_2016}, MPNN \citep{mpnn2017}, AttentiveFP \citep{attentivefp}, GIN \citep{xu2018gin}, as well as the standard GCN and GAT models. For EMGNN, a GNN model that generalizes GCN with a degree-1 convolutional filter $P_\lambda(S_\lambda)$ (refer to \cref{sec:spo}) is utilized as the backbone of our model. As such, we name the resulting model EM-GCN$^*$. The sample space $\Lambda$ spans the range $[1,10]\text{\AA}$ and $\widehat{\Lambda}$ is the discretized space with $0.05$ increments.

\subsection{Experimental Results}

In \cref{tab:result_chem}, the average test root mean square error (rmse) over ten runs with standard deviation is reported for the graph regression task, where the molecular properties of molecular graphs are to be predicted. The result shown is for the case of $q(\cdot)=p_t(\cdot)$. We observe that EM-GCN$^*$ frequently performed better than the baselines. This may be due to the training process of EM-GCN$^*$, which exposes it to diverse graph realizations, allowing it to capture non-covalent interactions that are critical for characterizing the physical properties of molecules. In contrast, the baselines employ the conventional molecular graph representation. We note that our framework does not explicitly incorporate bond angles but it does expose the model to graphs with a broad range of connectivities. This exposure indirectly integrates geometric information, as the latent variable constructs graphs with bond lengths falling within specific ranges. This provides our model with additional 2D information regarding interatomic distances, which may offer insights into the underlying 3D structure.

\section{Datasets and Implementation Details} \label{sec:dod}
All datasets used in this paper are publicly available and open-source.

\subsection{Heterogeneous Datasets} \label{sec:hed}
The characteristics of the three heterogeneous benchmark datasets are as summarized in \cref{table:hetero_dataset}. In the DBLP \url{https://dblp.uni-trier.de/} dataset, the research areas of the authors are to be predicted. Meanwhile, in the ACM \url{http://dl.acm.org/} and IMDB \url{https://www.imdb.com/interfaces/} datasets, the categories of papers and genres of movies are to be determined, respectively. The datasets are publicly available at \url{https://github.com/seongjunyun/Graph\_Transformer\_Networks}.

In the DBLP dataset, there are three distinct node types (Paper (P), Author (A), Conference (C)) and two edge types (PA, PC). The ACM dataset also comprises three node types (Paper (P), Author (A), and Subject (S)) and two edge types (PA, PS). 

\begin{table}[!htb]
\centering
\captionsetup{justification=centering}
\caption{Heterogeneous datasets. The number of A-B edges is equal to the number of B-A edges thus omitted.}
\label{table:hetero_dataset}
\vspace{2mm}
\begin{tabular}{@{}cccccc@{}}
\toprule
                      & Edge types (A-B)   & \# of A-B & \# Edges               & \# Edge types      & \# Features           \\ \midrule
\multirow{2}{*}{DBLP} & Paper - Author     & 19645     & \multirow{2}{*}{67946} & \multirow{2}{*}{4} & \multirow{2}{*}{334}  \\
                      & Paper - Conference & 14328     &                        &                    &                       \\ \midrule
\multirow{2}{*}{ACM}  & Paper - Author     & 9936      & \multirow{2}{*}{25922} & \multirow{2}{*}{4} & \multirow{2}{*}{1902} \\
                      & Paper - Subject    & 3025      &                        &                    &                       \\ \midrule
\multirow{2}{*}{IMDB} & Movie - Director   & 4661      & \multirow{2}{*}{37288} & \multirow{2}{*}{4} & \multirow{2}{*}{1256} \\
                      & Movie - Actor     & 13983     &                        &                    &                       \\ \bottomrule
\end{tabular}
\end{table}

\subsection{Homogenous Datasets} \label{sec:hda}
The utilized homogeneous graph datasets include Cora, Citeseer, Pubmed, Texas, Cornell and Wisconsin. The first three mentioned datasets citation networks where the nodes represent documents and the edges denote citation linkages \cite{kipf2016semi}. On the other hand, the latter three datasets are heterophilic, featuring connections between dissimilar nodes. We consider the edges as undirected. The statistics and parameters of the datasets are summarized in \cref{table:homo_dataset}.

\begin{table}[!htb]
\centering
\captionsetup{justification=centering}
\caption{Statistics and parameters of homogeneous graph datasets.}
\label{table:homo_dataset}
\vspace{2mm}
\begin{tabular}{@{}cccccc@{}}
\toprule
         & \# Nodes & \# Edges & \# Features & \# Classes & Threshold $\tau$ \\ \midrule
Cora     & 2708     & 5429     & 1433        & 7   &    0.5   \\
Citeseer & 3327     & 4732     & 3703        & 6   &    0.6   \\
Pubmed   & 19717    & 44338    & 500         & 3    &  0.9    \\ 
Texas    & 183     & 295     & 1703        & 5    &   0.6   \\
Cornell  & 183     & 280     & 1703        & 5     &   0.6  \\
Wisconsin & 251     & 466     & 1703        & 5    &    0.6  \\\bottomrule
\end{tabular}
\end{table}

\subsection{Chemical Datasets} 

The selected datasets, FreeSolv, ESOL, and Lipophilicity, are intended for graph regression tasks. In FreeSolv, the task involves estimating solvation energy. In ESOL, the objective is to estimate solubility. Lastly, Lipophilicity is for the estimation of lipophilicity. These are important molecular properties in the realm of physical chemistry and provide insights into how molecules interact with solvents.

\begin{itemize}
  \item FreeSolv. The dataset contains experimental and calculated hydration-free energy of 642 small neural molecules in water. 
  \item ESOL (for \emph{E}stimated \emph{SOL}ubility). ESOL consists of water solubility data for 1128 compounds. 
  \item Lipophilicity. This dataset contains experimental results of the octanol/water distribution coefficient of 4200 compounds, namely logP at pH 7.4, where P refers to the partition coefficient. Lipophilicity refers to the ability of a compound to dissolve in fats, oils and lipids. 
\end{itemize}

\subsection{Hyperparameters, Code and Implementation Tips} \label{sec:hca}
The models were trained on a server equipped with four NVIDIA RTX A5000 GPUs for hardware acceleration. As the parameter spaces are small for the datasets used in the paper, we apply the standard Metropolis-Hastings algorithm for MCMC. The code is uploaded as \emph{supplementary materials}. 

\paragraph{Heterogeneous node classification.} The Simple-HGN model was obtained from the CogDL library \citep{cen2023cogdl}, SeHGNN from their code repository (\url{https://github.com/ICT-GIMLab/SeHGNN}). Meanwhile, GAT was sourced from the DGL library \citep{wang2019dgl} and GCN is from \url{https://github.com/tkipf/pygcn}.

The hidden units are set to 64 for all models. The hyperparameters of SeHGNN and Simple-HGN are as in the respective repository. For fair comparison, GAT and GCN are evaluated on the entire graph ignoring the types. Specifically for our model, the number of MCMC iterations $N_{mc}$ is set to be $15000$, the $T$ denoting the number of EM iterations and $T'$ is tuned by searching on the following search spaces: $[10, 15, 20, 25, 30]$. 

\paragraph{Homogeneous node classification.} The BGCN model was obtained from its code repository at \url{https://github.com/huawei-noah/BGCN/tree/master}. Likewise, for RSGNN, the model was sourced from its code repository located at \url{https://github.com/EnyanDai/RSGNN}. As for the ACM-GCN+ model, it was acquired from its code repository (\url{https://github.com/SitaoLuan/ACM-GNN}). The hyperparameters used for evaluating the models align with those specified in their respective repositories if provided.

Unlike heterogeneous graphs, during implementation, once $\blambda = (\lambda_1,\lambda_2)$ is determined in MCMC, we still need to draw a graph $G_{\blambda}$ according to $\blambda$. The graph $G_{\blambda}$ is stored for later use. 

In the context of our model, $N_{mc}$ is set to be 15000, the $T$ and $T'$ is tuned by searching on the following search spaces: $[10, 15, 20, 25, 30]$. The choice of the threshold $\tau$ (see \cref{sec:hgr}) is tabulated in \cref{table:homo_dataset}. Notice that the threshold $\tau$ is chosen such that $E'$ for each dataset is neither too small or too big. 

\paragraph{Graph regression on molecular datasets.} The experiments on this task were facilitated using the DGL-LifeSci library \citep{dgllife} which provided the code and hyperparameters for the baseline models. For fair comparisons, all the models were evaluated using the canonical features as documented at \url{https://lifesci.dgl.ai/generated/dgllife.utils.CanonicalAtomFeaturizer.html}. Specifically for our model,  $N_{mc}$ is set to 18000, $T$ and $T'$ is tuned by searching on the following search spaces: $[10, 15, 20, 25, 30]$.

\section{Complexity} \label{sec:com}
We discuss the complexity of EM-GCN, focusing on the steps involving MCMC. The complexity of the remaining steps depends mainly on the chosen base model, e.g., GCN. We mainly compare with BGCN and RSGNN, which share certain common features with our model. 

In the E-step, where MCMC is executed, the model parameters remain fixed. Identical inputs to the ``frozen'' model consistently produce the same $L_\bX(\blambda, \btheta)$. Hence, to expedite MCMC iterations for a specific $\btheta$, we precompute, store and reuse $L_\bX(\blambda, \btheta), G_{\blambda}$ for all $\blambda \in \Lambda$, using a space complexity of $O(|\Lambda|(|E| + |V|))$, compared to BGCN's $O(|E|+|V|)$ and RSGNN's $O(|V|^2)$. Therefore, the space complexity of EM-GCN is higher than that of BGCN in general. It is lower than RSGNN for sparse graphs if $|V| \gg |\Lambda|$, i.e., the size of the (discretized) sample space is much smaller than the number of nodes.

As such, our model takes a comparatively lower training time than RSGNN and BGCN. The theoretical time complexity of our model is $O(K|\Lambda|\cdot|E| + N_{mc})$ for $N_{mc}$ MCMC iterations and $K$ GCN layers (we consider only the message passing operation in GCN layers). On the other hand, the time complexity of RSGNN with $\kappa$ MLP layers is $O(|V|d'd + (\kappa-1) N d^2 + K|E|)$ and BGCN is $O(K|E| + N_b |V|^2 N_c)$ for $N_b$ MMSBM iterations, $N_c$ classes, $d'$ feature size and $d$ hidden dimension. Time complexity does not include extra time taken for backpropagation in models with more parameters like BGCN and RSGNN. 

We show the explicit run-times (in seconds) for the models compared in \cref{tab:rtc}, with the same software and hardware configurations. We see that EM-GCN is much faster than both BGCN and RSGNN.

\begin{table}[!htb]
\centering
    \caption{Run-time (in sec.) comparison among different models }\label{tab:rtc}
\vspace{2mm}
\resizebox{0.5\textwidth}{!}{
\begin{tabular}{@{}cccc@{}}
\toprule
    & Cora              & Citeseer              & Pubmed            \\ \midrule
BGCN  & 328.14 $\pm$ 2.18 & 323.55 $\pm$ 4.86 & 1446.91 $\pm$ 31.31 \\
RSGNN & 147.09 $\pm$ 4.68 & 164.83 $\pm$ 1.71 & 341.52 $\pm$ 3.06  \\
EM-GCN & 86.06 $\pm$ 1.39 & 65.19 $\pm$ 14.27 & 102.18 $\pm$ 2.63  \\ \bottomrule
\end{tabular}}
\end{table}

\section{Related Work}\label{sec:related_works}

This section provides an overview of related work, specifically on the selected baseline models and their relevance to our model. On the aspect of heterogeneous graphs, GTN \cite{yun2019gtn} emphasized that disregarding the type information of such graphs and treating them as homogeneous is suboptimal. To address this, they devised a method to learn new graph structures using multiple candidate (edge type-specific) adjacency matrices and attention mechanisms. Their work is relevant to ours because, akin to GTN, we consider the given graph suboptimal. However, instead of learning new meta-path-related graph structures that introduce connections between unconnected nodes in the original graph, our setup and approach focus on the information transmission rates in such graphs. 

On the other hand, concerning homogeneous graphs, given that our model focuses on learning from a distribution of graphs, it makes sense to compare against methods that alter the observed one, albeit in different ways. As such, DropEdge \cite{Ron20} and RSGNN \cite{Dai22} were considered. In the case of DropEdge, edges in the given graph were randomly dropped at a manually selected rate in each epoch. This process generated different copies of the original graph, diversifying the input data used to train the model. Consequently, DropEdge was deemed capable of overcoming overfitting and oversmoothing. 

Meanwhile, for RSGNN, the model trained a link predictor to rewire the observed graph, eliminating or down-weighting noisy edges (connecting nodes with dissimilar features) and adding edges to densify the graphs. This was done to ensure the model is robust to noise and to address the issue of label sparsity by connecting more unlabeled nodes to labeled ones. In \cite{jifeng23}, a two-stage scheme is proposed to enhance the performance of existing GNN models, where the second stage involves edge drop along estimated class boundaries. 

Our work also shares a close connection with BGCN \cite{bgcn}, a work employing a Bayesian formulation tailored to address uncertainty in graphs. In their framework, the model parameters were treated as random variables, incorporating a prior distribution. Additionally, BGCN interpreted the observed graph as an instantiation from a parametric family of random graphs. Specifically, it adopted the assortative mixed membership stochastic block model (aMMSBM) as the underlying graph model. 

\section{Theoretical Discussions}
\label{sec:tdi}
\subsection{Discussions and the Proof of \cref{thm:iit}} \label{sec:dat}

Assume the parameter sample space $\Lambda$ is a metric space, whose metric is $d(\cdot,\cdot)$. Define the \emph{Wasserstein space} $\calP(\Lambda)$ to be the space of (Borel) probability distributions on $X$ with finite mean and variance. 

Given $\mu_1,\mu_2$ in $\calP(\Lambda)$, the \emph{Wasserstein metric} $W(\mu_1,\mu_2)$ is defined by
\begin{align*}
    W(\mu_1,\mu_2)^2 = \inf_{\gamma\in \Gamma(\mu_1,\mu_2)}\int d (\lambda,\lambda')^2 \ud\gamma(\lambda,\lambda'),
\end{align*}
where $\Gamma(\mu_1,\mu_2)$ is the set of \emph{couplings} of $\mu_1,\mu_2$, i.e., the collection of probability measures on $\Lambda \times \Lambda$ whose marginals are $\mu_1$ and $\mu_2$, respectively.

Intuitively, the Wasserstein metric is the minimum amount of ``work'' required to transform one probability distribution into the other, where the ``work'' is the sum of the product of the amount of probability mass to be moved and the distance that it must be moved. It is well-known that $W(\cdot,\cdot)$ makes $\calP(\Lambda)$ a metric space \citet{Vil09, Jif23, Ji23}.

Recall that for each $\lambda\in \Lambda$ there is an associated graph shift operator $S_{\lambda} \in M_n(\mathbb{R})$, where $n$ is the size of the node set $V$. This means that we have a \emph{parameterization map} 
\begin{align*}
\mathfrak{p}: \Lambda \to M_n(\mathbb{R}), \lambda\mapsto S_{\lambda},
\end{align*}
and $M_n(\mathbb{R})$ is endowed with the operator norm denoted by $\norm{\cdot}$. 

We analyze the following fundamental GNN model. For each graph shift operator $S_{\lambda}$, it defines a GNN layer according to (\ref{eq:vsw}), where the aggregation ``AGGR'' is achieved by multiplying the input features by $S_{\lambda}$. We further assume that the matrix $W^k$ in (\ref{eq:vsw}) depends only on $\btheta$ and the ``ReLU'' function is used for the activation $\sigma$. For a $K$, let $\Psi(\lambda,\cdot; \btheta)$ be the resulting $K$-layer GNN, where $\btheta$ is the estimated parameters of the model. For any input features $\bx$, we have the expected output $\bz_{\mu} = \E_{\lambda \in \mu}[\Psi(\lambda, \bx;\btheta)]$. Therefore, assuming $\bx, \btheta$ are fixed, we have the following \emph{feature map}
\begin{align*}
   \mathfrak{f}: \calP(\Lambda) \to \mathbb{R}^n, \mu \mapsto \bz_{\mu} = \E_{\lambda \in \mu}[\Psi(\lambda, \bx;\btheta)].
\end{align*}

In general, if $\phi: X_1 \to X_2$ is a map between metric spaces with respective metrics $d_1(\cdot,\cdot)$ and $d_2(\cdot,\cdot)$, then it is called \emph{H\"{o}lder continuous} if for some constants $C, \alpha>0$, we have 
$d_2\big(\phi(x_1),\phi(x_2)\big) \leq Cd_1(x_1,x_2)^{\alpha}, x,y \in X_1$. Ignoring the scalar $C$, we may also say that $\phi$ is \emph{$\alpha$-H\"{o}lder continuous} to emphasize the exponent. 

We can now state the following precise version of \cref{thm:iit}. 

\begin{Theorem}
Let $K$ be the number of GNN layers. If the parameterization $\mathfrak{p}: \Lambda \to M_n(\mathbb{R})$ is $\alpha$-H\"{o}lder continuous and bounded, then the feature map $\mathfrak{f}: \calP(\Lambda) \to \mathbb{R}^n$ is $\beta$-H\"{o}lder continuous for $\beta = K\alpha/(K\alpha+2)$. Moreover, if the activation $\sigma$ is bounded, e.g., $\tanh$, sigmoid, then we can remove the condition that $\mathfrak{p}$ is bounded.
\end{Theorem}

\begin{proof}
As $\mathfrak{p}$ is H\"{o}lder continuous, there is $C, \alpha>0$ such that $\norm{S_{\lambda} -S_{\lambda'}} \leq Cd(\lambda,\lambda')^{\alpha}$. Moreover, by the assumption, there is an upper bound $B$ on the norms of the image of $\mathfrak{p}$. 

For each $\lambda \in \Lambda$, the $k$-th GNN layer for $k\leq K$ takes the form $\by \mapsto \sigma(S_{\lambda}\by W^k)$, where $W^k$ is determined by the fixed model parameters $\btheta$. As the activation function $\sigma$ is $1$-Lipshitz, we have 
\begin{align} \label{eq:nss}
\norm{\sigma(S_{\lambda}\by W^k) - \sigma(S_{\lambda'}\by W^k)} \leq \norm{W^k}\norm{\by} \norm{S_{\lambda} - S_{\lambda'}} \leq C_1 \norm{\by} d(\lambda, \lambda')^{\alpha}, 
\end{align}
where $C_1$ is the constant $C\norm{W^k}$. Taking $\by$ as the output of the previous layer, we may repeat (\ref{eq:nss}) and obtain 
\begin{align}\label{eq:npl}
    \norm{\Psi(\lambda, \bx;\btheta) - \Psi(\lambda', \bx;\btheta)} \leq C_2d(\lambda,\lambda')^{K\alpha},
\end{align}
where $C_2$ is a constant independent of $\lambda, \lambda'$. 

Moreover, due to the norm upper bound $B$ of $S_{\lambda}$ for any $\lambda \in \Lambda$, there is an upper bound $B_1$ of $\Psi(\lambda, \bx;\btheta)$ independent of $\lambda$. If the activation $\sigma$ is bounded, then the same boundedness conclusion holds without assuming $\mathfrak{p}$ is bounded as the last layer ends with the bounded function $\sigma$.

To prove the theorem, consider $\mu_1, \mu_2 \in \calP(\Lambda)$. Let $\gamma \in \Gamma(\mu_1,\mu_2)$ be a distribution on $\Lambda \times \Lambda$ such that 
\begin{align} \label{eq:wmm}
        W(\mu_1,\mu_2)^2 = \int d (\lambda,\lambda')^2 \ud\gamma(\lambda,\lambda').
\end{align}
As the marginals of $\gamma$ are $\mu_1$ and $\mu_2$ respectively, we may express $\norm{\mathfrak{f}(\mu_1)-\mathfrak{f}(\mu_2)}$ as
\begin{align*}
    &\norm{\mathfrak{f}(\mu_1)-\mathfrak{f}(\mu_2)}  = \norm{\E_{\lambda \in \mu_1}[\Psi(\lambda, \bx;\btheta)]- \E_{\lambda \in \mu_2}[\Psi(\lambda, \bx;\btheta)]} \\
    = & \norm{\int \Psi(\lambda, \bx;\btheta) \ud \mu_1(\lambda) - \int \Psi(\lambda', \bx;\btheta) \ud \mu_2(\lambda')} \\
    = & \norm{\int \Psi(\lambda, \bx;\btheta) - \Psi(\lambda', \bx;\btheta) \ud \gamma(\lambda,\lambda')}.
\end{align*}
Let $a > 0$ be a number to be determined later. By (\ref{eq:wmm}), the Chebyshev inequality implies that the $\gamma$-measure of the set $\Sigma_a =\{(\lambda,\lambda') \mid d(\lambda,\lambda')\geq a\}$ is bounded by $W(\mu_1,\mu_2)/a^2$. With this, we estimate $\norm{\mathfrak{f}(\mu_1)-\mathfrak{f}(\mu_2)}$ as follows
\begin{align*}
    &\norm{\mathfrak{f}(\mu_1)-\mathfrak{f}(\mu_2)}  \leq  \int \norm{\Psi(\lambda, \bx;\btheta) - \Psi(\lambda', \bx;\btheta)} \ud \gamma(\lambda,\lambda') \\
    = & \int_{\Sigma_a} \norm{\Psi(\lambda, \bx;\btheta) - \Psi(\lambda', \bx;\btheta)} \ud \gamma(\lambda,\lambda') + \int_{\Lambda\times \Lambda \backslash \Sigma_a} \norm{\Psi(\lambda, \bx;\btheta) - \Psi(\lambda', \bx;\btheta)} \ud \gamma(\lambda,\lambda') \\
    \leq & \int_{\Sigma_a} 2B_1 \ud \gamma(\lambda,\lambda') + \int_{\Lambda\times \Lambda \backslash \Sigma_a} C_2d(\lambda,\lambda')^{K\alpha} \ud \gamma(\lambda,\lambda') \\
    \leq & \frac{2B_1W(\mu_1,\mu_2)}{a^2} + C_2a^{K\alpha}. 
\end{align*}
Minimizing the right-hand-side (by taking its derivative w.r.t.\ $a$), we have $a = C_3W(\mu_1,\mu_2)^{1/(K\alpha+2)}$, where $C_3 = [4B_1/(C_2K\alpha)]^{1/(K\alpha+2)}$. Plugging in the expression of $a$ into the estimation, we have
\begin{align*}
\norm{\mathfrak{f}(\mu_1)-\mathfrak{f}(\mu_2)} \leq \frac{2B_1}{C_3^2} W(\mu_1,\mu_2)^{1-2/(K\alpha+2)}+ C_2C_3^{K\alpha} W(\mu_1,\mu_2)^{K\alpha/(K\alpha+2)}.
\end{align*}
Therefore, $\mathfrak{f}$ is $\beta$-H\"{o}lder continuous for $\beta = K\alpha/(K\alpha+2)$. 
\end{proof}

\subsection{The Proof of \cref{thm:afa}} \label{sec:tpo}

The proof is a refinement of the proof of \citep[Theorem 3.12]{Har16}. As $L_{\bX}$ is assumed to be convex, we observe that in the expression for $J_{\Lambda_{T'}}(\widehat{\btheta})$, a term is convex if $p(\lambda_i) \geq p_0(\lambda_i)$ and concave otherwise. Therefore, we need to separate these two cases when performing gradient descent. 

We follow the proof of \citep[Theorem 3.12]{Har16}, and highlight necessary changes. By \citep[Theorem 2.2]{Har16}, it suffices to show that the algorithm $\mathcal{A}$ is $\epsilon$-uniformly stable \citep[Definition 2.1]{Har16}. Let $\Lambda_1 = \{\lambda_{1,1},\ldots,\lambda_{1,T'}\}$ and $\Lambda_2 = \{\lambda_{2,1},\ldots,\lambda_{2,T'}\}$ be two sample sequences of $\lambda$ that differ in only a single sample. Consider the gradient updates $\Gamma_{1,1}, \ldots, \Gamma_{1,T'}$ and $\Gamma_{2,1}, \ldots, \Gamma_{2,T'}$. Let $\widehat{\btheta}_{1,t'}$ and $\widehat{\btheta}_{2,t'}, t'\leq T'$ be the corresponding outputs of the algorithm $\mathcal{A}$. 

Introduce $f(\lambda,\btheta)=\frac{p(\lambda)-p_0(\lambda)}{q(\lambda)}L_{\bX}(\lambda,\btheta)$. As $L_{\bX}(\lambda,\cdot)$ is convex, $\alpha$-Lipschitz and $\beta$-smooth, $f(\lambda,\btheta)$ is $\alpha\gamma$-Lipschitz, $\beta\gamma$-smooth. Moreover, it is convex if $p(\lambda) \geq p_0(\lambda)$. 

Write $\delta_{t'}$ for $\norm{\widehat{\btheta}_{1,t'}-\widehat{\btheta}_{2,t'}}$. Using the fact that $f(\lambda,\cdot)$ is $(\alpha\gamma)$-Lipschitz, by \citep[Lemma 3.11]{Har16}, we have for any $t_0\leq T'$
\begin{align*}
    &\mathbb{E} \big(|f(\lambda,\widehat{\btheta}_{1,T'})-f(\lambda,\widehat{\btheta}_{2,T'})|\big) \\ & \leq \frac{t_0}{T'} + \alpha\gamma\mathbb{E}(\delta_{T'}\mid \delta_{t_0}=0). 
\end{align*}
We need an estimation of $\mathbb{E}(\delta_{T'}\mid \delta_{t_0}=0)$. For convenience, for any $t'\geq t_0$, let $\Delta_{t'} = \mathbb{E}(\delta_{t'}\mid \delta_{t_0}=0)$. 

Observe that at step $t'$, with probability $1-1/n$, the samples selected are the same in both $\Lambda_1$ and $\Lambda_2$. Moreover, with probability $(1-1/n)b_1$, the common sample is convex, so we can apply \citep[Lemma 3.7.2]{Har16}. With probability $(1-1/n)(1-b_1)$, the common sample is non-convex, and \citep[Lemma 3.7.1]{Har16} is applicable. With probability $1/n$, the selected samples are different and $\Gamma_{1,t'}$ and $\Gamma_{2,t'}$ are respectively $\frac{|p(\lambda)-p_0(\lambda)|}{q(\lambda)}\alpha a_{t'}$-bounded, by \citep[Lemma 3.3]{Har16}. 

Therefore, by linearity of expectation and \citep[Lemma 2.5]{Har16}, we may estimate:
\begin{align*}
    \Delta_{t+1} & \leq (1-\frac{1}{n})\big(b_1 + (1-b_1)(1+a_{t'}\beta\gamma)\big)\Delta_{t'} + \frac{1}{n}\Delta_{t'} \\ & + \frac{\alpha a_{t'}}{n}\mathbb{E}_{\lambda_{1,t'},\lambda_{2,t'}}\big( \frac{|p(\lambda_{1,t'})-p_0(\lambda_{1,t'})|}{q(\lambda_{1,t'})} \\ & + \frac{|p(\lambda_{2,t'})-p_0(\lambda_{2,t'})|}{q(\lambda_{2,t'})} \big) \\
    & = (1-\frac{1}{n})\big(b_1 + (1-b_1)(1+a_{t'}\beta\gamma)\big)\Delta_{t'} \\& + \frac{1}{n}\Delta_{t'} + \frac{2b_2\alpha a_{t'}}{n} \\
    & \leq (1-\frac{1}{n})\big(b_1 + (1-b_1)(1+\frac{c\beta\gamma}{t'})\big)\Delta_{t'} \\&  + \frac{1}{n}\Delta_{t'} + \frac{2b_2\alpha c}{nt'} \\ & = \big(1 + (1-\frac{1}{n})(1-b_1)\frac{c\beta\gamma}{t'} \big) \Delta_{t'} + \frac{2c(b_2\alpha)}{nt'} \\
    & \leq \exp\big((1-\frac{1}{n})(1-b_1)\frac{c\beta\gamma}{t'}\big)\Delta_{t'} + \frac{2c(b_2\alpha)}{nt'} \\
    & = \exp\Big((1-\frac{1}{n})\frac{c\big((1-b_1)\beta\gamma\big)}{t'}\Big)\Delta_{t'} + \frac{2c(b_2\alpha)}{nt'}. 
\end{align*}
Then, by the same argument as in the proof of \citep[Theorem 3.12]{Har16}, we have the algorithm $\mathcal{A}$ is $\epsilon$-uniformly stable for any
\begin{align*}
\epsilon \leq C \big(\frac{b_2^2\alpha^2}{T'}\big)^{\frac{1}{\beta\gamma c(1-b_1)+1}},
\end{align*}
for some constant $C$ independent of $T'$ and $\alpha$. 

\section{Miscellaneous Discussions}

\subsection{The Intuition of Typicality (cf.\ (\ref{eq:elb}))} \label{sec:tio}
\begin{figure}[htb!]
    \centering
    \includegraphics[scale=0.5]{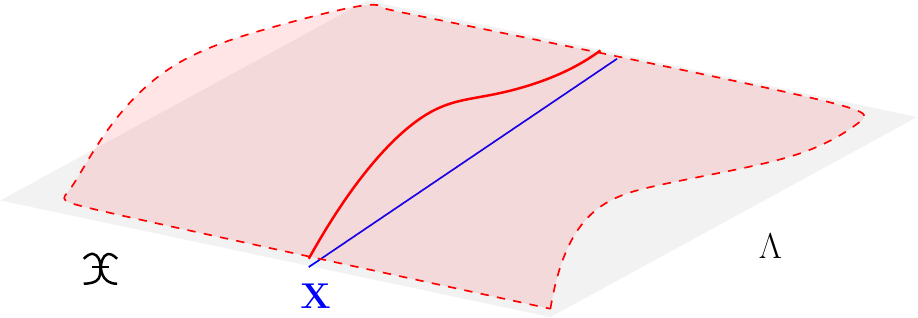}
    \caption{An illustration of the typicality assumption.}
    \label{fig:aio}
\end{figure}
As illustrated in \figref{fig:aio}, we are interested in taking the average, formally the expectation w.r.t.\ a distribution $\pi_0$, of a function over $\Lambda \times \mathfrak{X}$. The function is illustrated by the red surface in \figref{fig:aio}. The ``typicality'' assumption on $\bX \in \mathfrak{X}$ requires that there is a distribution $p_{0,\bX}$ such that the above average over $\Lambda \times \mathfrak{X}$ is (approximately) the same as the average, i.e., expectation w.r.t.\ $p_{0,\bX}$, over the ``line'' $\{\bX\} \times \Lambda$. Intuitively, ``typicality'' says that the cross-section of the function at $\bX$ displays the same pattern as the function on the entire domain $\bLambda\times \mathfrak{X}$. Hence, statistics on the entire domain can be estimated from observations on the cross-section $\{\bX\} \times \Lambda$.

As we have mentioned in \cref{rmk:fta}, ``typicality'' allows us to simplify the optimization problem. For future work, we are also interested in solving the optimization with a weaker assumption.

\subsection{The Dominant Component of \cref{eq:lca}}\label{subsubsec:numverify}
\begin{figure}[!htb]
    \centering
\includegraphics[width=0.5\columnwidth]{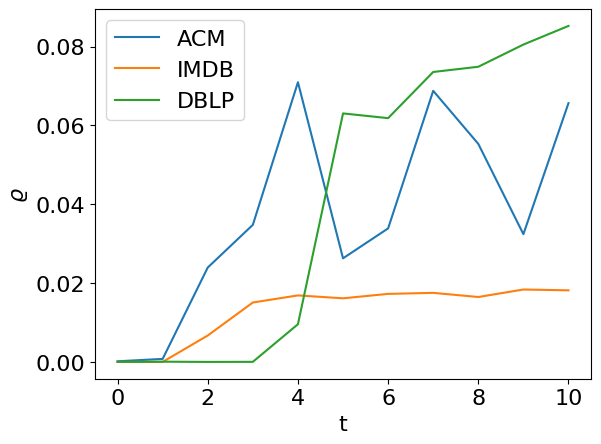}
    \caption{Plot of $\varrho$ across $t$ EM iterations}\label{fig:ratio}
\end{figure}
In \cref{subsec:EM}, we use $-\eta^{(t)}\E_{\lambda\sim p_0}\big({L}_{\bX}(\lambda,\btheta) \big)$ to approximate $\log C(\btheta)$. To justify this, we provide numerical evidence that the dominant component of \cref{eq:lca} is $-\eta^{(t)}\E_{\lambda\sim p_0}\big({L}_{\bX}(\lambda,\btheta) \big)$, based on \cref{subsec:heterograph}. The above assertion is supported by assessing the ratio
\begin{align*}
     \varrho = \frac{\rho}{-\eta^{(t)}\E_{\lambda\sim p_0}\big({L}_{\bX}(\lambda,\btheta) \big)},
\end{align*}
where $\rho=\frac{\Var(\exp(-\eta^{(t)}{L}_{\bX}(\lambda,\btheta)))}{2\big(\mathbb{E}_{\lambda\sim p_0}\exp(-\eta^{(t)}{L}_{\bX}(\lambda,\btheta))\big)^2}$.

We plot the $\varrho$ values for the heterogeneous graph datasets on EM-GCN[PT] over multiple $t$ iterations in \cref{fig:ratio}. 
We found that $\varrho$ consistently exhibits small absolute values, supporting the postulation that $-\eta^{(t)}\E_{\lambda\sim p_0}\big({L}_{\bX}(\lambda,\btheta) \big)$ is the main component in \cref{eq:lca}.

\end{document}
